\documentclass{article}

\usepackage{microtype}
\usepackage{graphicx}
\usepackage{subcaption}
\usepackage{booktabs} 
\usepackage{multirow}

\usepackage{hyperref}

\usepackage[accepted]{icml2026}

\usepackage{amsmath}
\usepackage{amssymb}
\usepackage{mathtools}
\usepackage{amsthm}

\usepackage[capitalize,noabbrev]{cleveref}

\theoremstyle{plain}
\newtheorem{theorem}{Theorem}[section]

\theoremstyle{definition}

\theoremstyle{remark}

\usepackage[normalem]{ulem} 

\usepackage[textsize=tiny]{todonotes}
\usepackage{xcolor}
\usepackage{tabularx}
\usepackage{adjustbox}
\usepackage{multirow}
\usepackage{booktabs}
\usepackage{makecell}

\usepackage{algorithm}
\usepackage{algorithmic}
\usepackage[ruled,vlined,linesnumbered,algo2e]{algorithm2e}
\usepackage{xcolor}
\usepackage{tcolorbox}
\usepackage[table]{xcolor}
\usepackage{marvosym}
\newcommand{\emailsymbol}{\textsuperscript{\Letter}}

\definecolor{step1bg}{HTML}{FFF8DC}
\definecolor{step2bg}{HTML}{D4EAE0}
\definecolor{step3bg}{HTML}{D9ECFC}

\newtcolorbox{algblock}[1][]{
    boxrule=0pt,
    arc=0pt,
    outer arc=0pt,
    left=2pt,
    right=2pt,
    top=2pt,
    bottom=2pt,
    before skip=3pt,
    after skip=3pt,
    colback=white,
    #1
}

\usepackage{xurl} 
\usepackage{amssymb}
\usepackage{hyperref}
\usepackage{pifont}
\usepackage{array} 
\usepackage{tikz}
\usepackage{xcolor}

\usetikzlibrary{calc}

\newcommand{\tsb}{\textsubscript}

\icmltitlerunning{Task-Driven Subspace Decomposition for Knowledge Sharing and Isolation in LoRA-based Continual Learning}

\begin{document}

\twocolumn[
  \icmltitle{Task-Driven Subspace Decomposition for Knowledge Sharing and Isolation in LoRA-based Continual Learning}



  \icmlsetsymbol{equal}{*}

  \begin{icmlauthorlist}
    \icmlauthor{Lingfeng He}{1}
    \icmlauthor{De Cheng\emailsymbol}{1}
    \icmlauthor{Huaijie Wang}{2}
    \icmlauthor{Xi Yang}{1}
    \icmlauthor{Nannan Wang}{1y}
    \icmlauthor{Xinbo Gao}{2}
  \end{icmlauthorlist}

  \icmlaffiliation{1}{State Key Laboratory of Integrated Services Networks, School of Telecommunications Engineering, Xidian University, Xi'an, China}
  \icmlaffiliation{2}{School of Electronic Engineering, Xidian University, Xi'an, China}

  \icmlcorrespondingauthor{De Cheng}{dcheng@xidian.edu.cn}
  \icmlkeywords{Machine Learning, ICML}

  \vskip 0.3in
]



\printAffiliationsAndNotice{}  


\begin{abstract}
Continual Learning (CL) requires models to sequentially adapt to new tasks without forgetting old knowledge. Recently, Low-Rank Adaptation (LoRA), a representative Parameter-Efficient Fine-Tuning (PEFT) method, has gained increasing attention in CL. Several LoRA-based CL methods reduce interference across tasks by separating their update spaces, typically building the new space from the estimated null space of past tasks. However, they (i) overlook task-shared directions, which suppresses knowledge transfer, and (ii) fail to capture truly effective task-specific directions since these ``null bases" of old tasks can remain nearly inactive for new task under correlated tasks. To address this, we study LoRA learning capability from a projection energy perspective, and propose \uline{Lo}w-rank \uline{D}ecomposition and \uline{A}daptation (LoDA). It performs a task-driven decomposition to build general and truly task-specific LoRA subspaces by solving two energy-based objectives, decoupling directions for knowledge sharing and isolation. LoDA fixes LoRA down-projections on two subspaces and learns robust up-projections via a Gradient-Aligned Optimization (GAO) approach. After each task, before integrating the LoRA updates into the backbone, LoDA derives a closed-form recalibration for the general update, approximating a feature-level joint optimum along this task-shared direction. Experiments indicate that LoDA outperforms existing CL methods.

\end{abstract}


\section{Introduction}

Continual Learning (CL) studies how a model can continuously acquire new knowledge from a stream of tasks while retaining previously learned capabilities~\cite{EwC, lorapmNIPS25}. A key challenge is the stability-plasticity dilemma~\cite{kim2023stability}: the model must remain plastic enough to adapt to new tasks while keeping stable to avoid forgetting of learned knowledge. With the rise of Pre-Trained Models (PTMs)~\cite{2021clip, cheng2024disentangled, xu2026reasoning}, CL has shifted from training from scratch~\cite{EwC, 2021nullspace} to sequentially adapting strong pre-learned representations. Recent open-world recognition studies further emphasize the need to adapt PTMs to emerging categories and concepts \citep{tang2025dissecting,tang2025ocrt,tangbures} in real world. In this regime, Parameter-Efficient Fine-Tuning (PEFT)-based methods~\cite{l2pCVPR22, codaCVPR23, ssaitCVPR24, xu2025adversarial, cheng2026prompt, tian2025rgmem} introduce lightweight trainable components for adaptation while preserving the generalization of the backbone, achieving remarkable CL performances.

Among various PEFT paradigms, Low-Rank Adaptation (LoRA)~\cite{lora} emerged as a representative and widely-used approach due to its simplicity. It parameterizes the updates with two trainable low-rank matrices while keeping the pre-trained weights frozen. Several works~\cite{biloraCVPR25, infloraCVPR24} have explored LoRA for CL by explicitly isolating task-specific LoRA update subspaces to reduce cross-task interference and mitigate forgetting. However, they face two limitations. Firstly, they mainly focus on subspace isolation, which can discard a large portion of general and transferable directions across tasks, thereby suppressing inter-task knowledge sharing. Secondly, they build the ``isolated'' subspace for the new task by seeking directions that are inactive on old tasks, typically from the estimated null space of old tasks~\cite{infloraCVPR24, olora-acl2023}. But under correlated task distributions in realistic CL, the null spaces of past and new tasks can exhibit substantial overlap, meaning that the estimated isolated bases may still be nearly inactivated by the new task. As a result, they form a ``safe zone" rather than a truly effective task-specific subspace. These limitations raise a key question: \textbf{\emph{how can we properly set LoRA subspaces to preserve transferable directions while learning truly task-specific knowledge, achieving a better stability--plasticity trade-off?}}

To address this, we propose a \uline{Lo}w-rank \uline{D}ecomposition and \uline{A}daptation (LoDA) framework for CL, which performs a task-driven decomposition and derives two LoRA subspaces for knowledge sharing and task isolation. We first reveal that \textbf{\emph{the impact of a LoRA update on the loss is governed by the projection energy of the task features onto the down-projection's row subspace.}} This motivates a data-driven design to freeze down-projections as a gate that selects learnable new feature components while controlling interference with past tasks. Guided by this, LoDA decomposes the update space into two down-projection subspaces using new data and cumulative old data statistics, as shown in Fig.\ref{fig: intro}: (i) a \textbf{\emph{general subspace}} $\mathcal{U}_G$ that yields high projection energy across both old and new tasks ($\mathcal{U}_G=\arg\max\limits_{\mathcal{U}}(E_{\text{old}}+E_{\text{new}})$), 
capturing directions salient across all tasks and enabling knowledge transfer; (ii) an \textbf{\emph{isolated subspace}} $\mathcal{U}_I$ that exhibit largest new-to-old relative energy ($\mathcal{U}_I=\arg\max\limits_{\mathcal{U}}(E_{\text{new}}/E_{\text{old}})$), identifying updates effective for the new task with little impact on past tasks.

\begin{figure}[!t]
\centering
\includegraphics[width=0.46\textwidth]{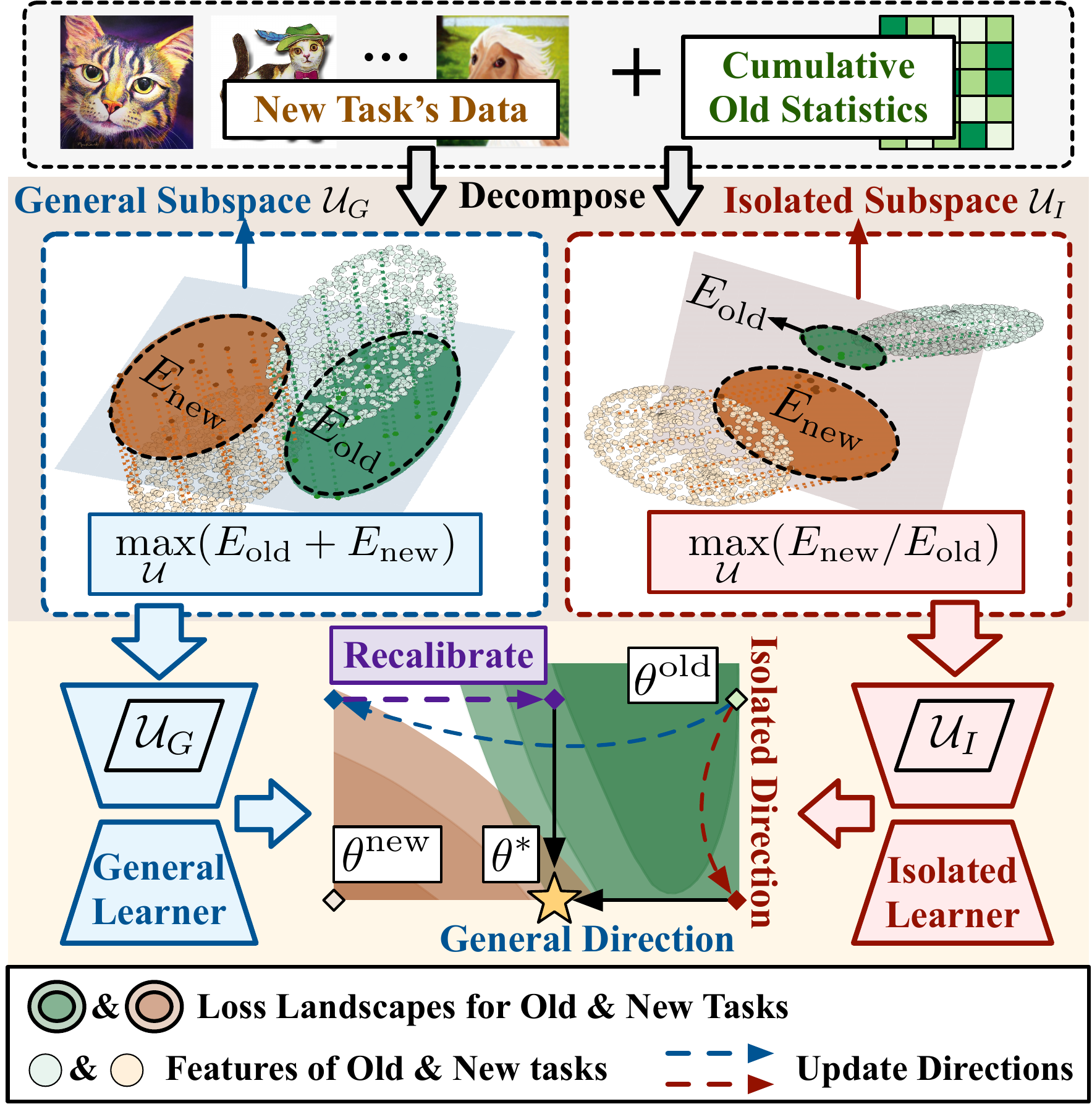}
\caption{LoDA decomposes the update space into general \& isolated subspaces ($\mathcal{U}_G$ \& $\mathcal{U}_I$) to fix LoRA down-projections. By updating in two subspaces with a post-hoc recalibration, it reaches $\theta^*$ in a shared low-loss region of old and new tasks. $E_{\text{old}}$ and $E_{\text{new}}$ denote the \emph{\textbf{subspace projection energy}} of old and new features. Darker colors in the loss contour map denote lower values.
}\label{fig: intro}
\end{figure}


Based on the decomposition, LoDA builds a dual-branch LoRA module: each branch’s down-projection is fixed on decomposed subspace bases and the up-projection is learnable. To steer the learning of up-projections toward robust and cross-class conflict-free directions, we design Gradient-Aligned Optimization (GAO) that encourages its gradient consistency across label-disjoint subsets. After each task, LoDA integrate the decoupled updates into the backbone: (i) Since the general update inevitably induces feature drift of old tasks when learning the new task, we derive a closed-form rescaling matrix to recalibrate it by minimizing feature optimization errors on all tasks to approximate a joint optimum; (ii) Since the isolated update incurs little interference on past tasks, we directly merge it into the backbone.

Our main contributions can be summarized as follows:
\begin{itemize}
\item We propose a task-driven decomposition that constructs general and truly isolated down-projection subspaces based on the feature projection energy, decoupling directions for knowledge sharing and isolation.
\item We propose LoDA\footnote{\url{https://github.com/HHHLF/LoDA_ICML2026}}, a dual-branch LoRA framework that fixes down-projections on task-driven bases and learns robust up-projections via GAO, with a post-hoc closed-form recalibration for the general branch.
\item Experiments demonstrate that LoDA outperforms existing PEFT-based CL methods on multiple benchmarks.
\end{itemize}

\begin{figure*}[!t]
\centering
\includegraphics[width=1.00\textwidth]{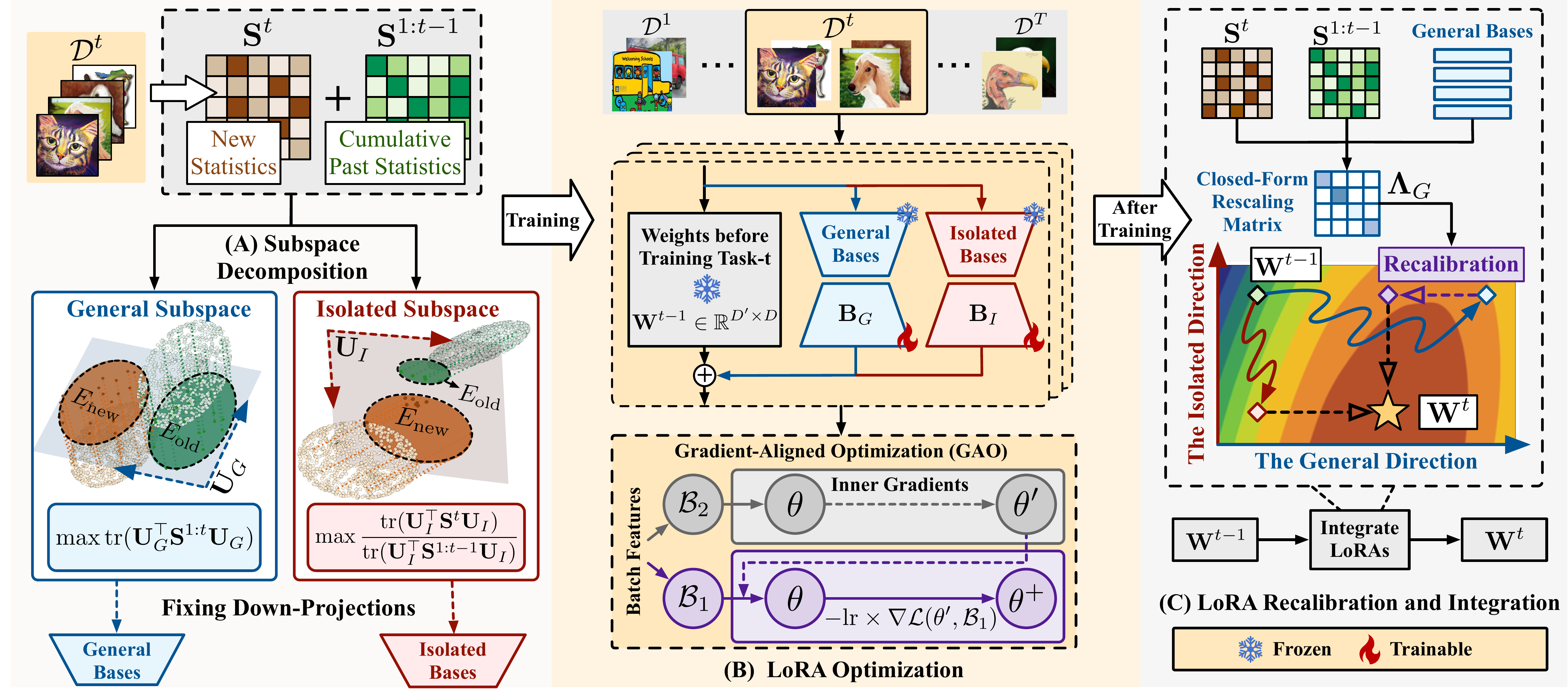}
\vspace{-1mm}
\caption{Overall framework. At Task-$t$, we freeze backbone weight $\mathbf W^{t-1}$ and insert a dual-branch LoRA module. (A) We decompose the update space into general \& isolated subspaces for LoRA down-projections. (B) Then we freeze down-projections and train up-projections on $\mathcal{D}^t$ via GAO. (C) After training, the LoRA updates are integrated into backbone with a recalibration on the general branch.}\label{fig: overall_framework}
\end{figure*}

\section{Related Works}



\textbf{PEFT-based Continual Learning.} PEFT-based CL methods~\cite{wang2025stpr, chen2026multi, he2026harnessing} typically keep a pre-trained model frozen and insert lightweight learnable modules for continual adaptation. Prompt pool-based methods~\cite{l2pCVPR22, codaCVPR23, cpromptCVPR24} build a set of learnable prompts and update relevant prompts during each session to encode task-specific knowledge. PGP~\cite{pgpICLR24} and VPT-NSP~\cite{vptnspNIPS-24} further theoretically deduce anti-forgetting conditions for visual prompts and constrain their updates via gradient projection. Adapter-based methods~\cite{DIA-cvpr25, ssaitCVPR24, he2025ckaa, wang2026ekpc} insert lightweight adapters into transformer blocks and mitigate forgetting via network expansion or feature drift compensation.


Recently, LoRA-based approaches gain increasing attention. SD-LoRA~\cite{sdloraICLR25} separates the learning of directions and magnitudes to update parameters within a low-loss path. Some methods design regularization schemes for LoRA to mitigate forgetting. O-LoRA~\cite{olora-acl2023} and InfLoRA~\cite{infloraCVPR24} constrain LoRA updates to estimated null space of previous tasks, thereby minimizing past output change when learning new tasks. BiLoRA~\cite{biloraCVPR25} and PLAN~\cite{planICCV25} impose fixed orthogonal bases on LoRA down-projections, restricting the updates to pre-defined subspace and reducing cross-task interference. However, such orthogonal constraints may fail to uncover truly isolated subspaces while discarding task-shared directions, limiting stability-plasticity trade-off.

\textbf{Model Merging.} Model merging~\cite{ilharco2023editing, ortiz2023task} aims to integrate weights trained on different tasks into a single network. Some works~\cite{stoica2024model, gargiulo2025task} factorize task-wise updates in singular directions and combine them to reduce task interference. Several works have extended this idea to CL. Connector~\cite{connector-CVPR22} designs a convex combination for stability- and plasticity-oriented weights. CoMA~\cite{CoFiMAECCV24} designs an EMA-style merging for old and new weights and enhances it with Fisher matrix. BECAME~\cite{li2025became} models loss changes induced by merging and derives optimal merging coefficients. They mainly rely on local linearity assumptions and gradient-based estimates, leading to approximation errors. We design a feature-level objective and deduce an exact optimum, avoiding such error.


\section{Methodology}

\textbf{Problem Definition.} In CL, there are $T$ sequential tasks with non-overlapping classes. The datasets are defined as $\{\mathcal{D}^1, \cdots, \mathcal{D}^t, \cdots, \mathcal{D}^T$\}. $\mathcal{D}^t = \{\mathbf{x}^t_i, y^t_i\}_{i=1}^{N^t}$ denotes the $t$-th task's dataset, where $\mathbf{x}^t_i \in \mathbb{R}^{H \times W \times C}$ is the $i$-th input and $y^t_i \in \mathcal{Y}^t$ is the its class label. $\mathcal{Y}^t$ denotes the label space of the $t$-th task, and $\mathcal{Y}^t \cap \mathcal{Y}^{t'} = \emptyset$ for $t \neq t'$. The objective of CL is to train a model $f(\theta, \cdot)$ sequentially on $T$ tasks and performs well on all seen classes $\mathcal{Y}^1 \cup \cdots \cup \mathcal{Y}^T$.

\textbf{Overall framework}. An overview of the proposed method is shown in Fig.~\ref{fig: overall_framework}. Following existing PEFT-based methods~\cite{l2pCVPR22, ssaitCVPR24}, we use a pre-trained Vision Transformer (ViT)~\cite{ViT} as backbone. At the $t$-th task, for each ViT layer, we freeze the backbone weights $\mathbf W^{t-1}$ and attach a dual-branch LoRA module, including a general branch for knowledge sharing and an isolated branch to learn task-specific updates. Guided by our gradient analysis in Sec.~\ref{sec: gradient_analysis}, a task-driven subspace decomposition (Fig.~\ref{fig: overall_framework}(A)) is proposed to derive the general and isolated subspace bases via two projection-energy-based objectives. We freeze each branch's down-projection on the decomposed subspace bases, and then learn robust up-projection on $\mathcal D^t$ via a Gradient-Aligned Optimization (GAO) approach, which encourages gradient consistency across label-disjoint subsets(Fig.~\ref{fig: overall_framework}(B)). After each task, we integrate the LoRA updates into the backbone weight. The general branch is recalibrated by a closed-form rescaling matrix $\mathbf{\Lambda}_G$ to approximate a feature-level joint optimum, while the isolated branch is directly merged (Fig.~\ref{fig: overall_framework}(C)).

\vspace{-2mm}
\subsection{LoRA Analysis when Fixing Down-Projections}\label{sec: gradient_analysis}

Given the backbone weight $\mathbf{W}^{t-1}\in\mathbb{R}^{D'\times D}$ before the $t$-th task,
where $D$ and $D'$ are the input and output dimensions.
LoRA trains a low-rank update parameterized by two matrices
$\mathbf{A}\in\mathbb{R}^{r\times D}$ and $\mathbf{B}\in\mathbb{R}^{D'\times r}$, while keeping
$\mathbf{W}^{t-1}$ fixed. For an input $\mathbf{X}\in\mathbb{R}^{D}$, the layer output is given by:
\vspace{-1mm}
\begin{equation}\label{eq:lora_forward}
\mathbf{Y}=\mathbf{X}\big(\mathbf{W}^{t-1}+\mathbf{B}\mathbf{A}\big)^\top \in\mathbb{R}^{D'}.
\vspace{-2.0mm}
\end{equation}
\begin{theorem}\label{lemma_main:proj_energy_gate}
Let $\mathcal{L}(\mathbf{Y})$ be a differentiable loss for $\mathbf{Y}$. Fix $\mathbf A$ and update only $\mathbf B$ by one gradient descent step $\mathbf B'=\mathbf B-\eta\,\frac{\partial \mathcal L}{\partial \mathbf B}$. Then the first-order update $\Delta \mathbf{Y}$ of $\mathbf{Y}$ and the loss decrease is gated by the projected energy $E=\Vert\mathbf A\mathbf X^\top\Vert_2^2$:
\begin{equation}\label{eq:gate_combined}
\begin{aligned}
&\Delta\mathbf{Y} \;\triangleq\; \mathbf Y'-\mathbf{Y}
= -\eta\,\Vert\mathbf A\mathbf X^\top\Vert_2^2 \frac{\partial \mathcal L}{\partial \mathbf Y},\\
&\mathcal L(\mathbf Y')-\mathcal L(\mathbf Y)= -\eta\,\Vert\mathbf A\mathbf X^\top\Vert_2^2\Big\|\frac{\partial \mathcal L}{\partial \mathbf Y}\Big\|_2^2 + o(\|\Delta\mathbf Y\|).
\end{aligned}
\end{equation}
If the rows of $\mathbf A$ are orthonormal ($\mathbf A\mathbf A^\top=\mathbf I_r$ and $\mathbf I_r$ is the $r \times r$ identity matrix), $\Vert\mathbf{A}\mathbf{X}^\top\|_2^2$
is the squared Euclidean norm of the projection of $\mathbf X$ onto the row space of $\mathbf{A}$. 
\end{theorem}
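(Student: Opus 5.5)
The plan is a direct chain-rule computation that treats $\mathbf X$ as a row vector in $\mathbb R^{1\times D}$, so that $\mathbf Y=\mathbf X(\mathbf W^{t-1})^\top+\mathbf X\mathbf A^\top\mathbf B^\top\in\mathbb R^{1\times D'}$.

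First, compute $\frac{\partial\mathcal L}{\partial\mathbf B}$. Write $\mathbf g\triangleq\frac{\partial\mathcal L}{\partial\mathbf Y}\in\mathbb R^{1\times D'}$ and $\mathbf z\triangleq\mathbf A\mathbf X^\top\in\mathbb R^{r}$. Since $\mathbf Y$ depends on $\mathbf B$ only through the term $\mathbf z^\top\mathbf B^\top$, i.e.\ $\mathbf Y^\top=\mathbf W^{t-1}\mathbf X^\top+\mathbf B\mathbf z$, the differential is $d\mathcal L=\mathbf g\,(d\mathbf B)\,\mathbf z=\operatorname{tr}(\mathbf z\mathbf g\,d\mathbf B)$. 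Hence
\[
\frac{\partial\mathcal L}{\partial\mathbf B}=\mathbf g^\top\mathbf z^\top\in\mathbb R^{D'\times r},
\]
a rank-one matrix.

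Second, substitute the update $\mathbf B'=\mathbf B-\eta\,\mathbf g^\top\mathbf z^\top$. Because $\mathbf Y$ is affine in $\mathbf B$ with $\mathbf A$ fixed, the output change is exact rather than merely first order:
\[
\Delta\mathbf Y^\top=(\mathbf B'-\mathbf B)\mathbf z=-\eta\,\mathbf g^\top(\mathbf z^\top\mathbf z)=-\eta\,\|\mathbf A\mathbf X^\top\|_2^2\,\mathbf g^\top,
\]
which is the first identity of \eqref{eq:gate_combined}.

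Third, Taylor-expand the differentiable loss at $\mathbf Y$: $\mathcal L(\mathbf Y')-\mathcal L(\mathbf Y)=\langle\mathbf g,\Delta\mathbf Y\rangle+o(\|\Delta\mathbf Y\|)$. Inserting the expression for $\Delta\mathbf Y$ gives $-\eta\|\mathbf A\mathbf X^\top\|_2^2\|\mathbf g\|_2^2+o(\|\Delta\mathbf Y\|)$, which is the second identity.

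Finally, for the orthonormal case, $\mathbf A\mathbf A^\top=\mathbf I_r$ implies that $\mathbf P=\mathbf A^\top\mathbf A$ is the orthogonal projector onto the row space of $\mathbf A$. Then
\[
\|\mathbf P\mathbf X^\top\|_2^2=\mathbf X\mathbf A^\top\mathbf A\mathbf A^\top\mathbf A\mathbf X^\top=\mathbf X\mathbf A^\top\mathbf A\mathbf X^\top=\|\mathbf A\mathbf X^\top\|_2^2,
\]
so the gate is exactly the squared norm of the projection of $\mathbf X$ onto that row space.

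There is no real obstacle. The only care needed is bookkeeping of the row-versus-column vector conventions and transposes in the gradient formula. A secondary point is that if $\mathbf X$ denotes several tokens, the result should be read per token, or with $E$ replaced by a Gram-weighted version; I would state the single-vector case, as the theorem does.
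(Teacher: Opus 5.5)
Your proof is correct and follows the paper's argument essentially step for step. The shared steps are the differential computation giving $\partial\mathcal L/\partial\mathbf B=\mathbf g^\top(\mathbf X\mathbf A^\top)$, substitution into $\Delta\mathbf Y$ to isolate the scalar $\mathbf X\mathbf A^\top\mathbf A\mathbf X^\top=\|\mathbf A\mathbf X^\top\|_2^2$, and a first-order Taylor expansion of $\mathcal L$. You also add two things the paper leaves implicit: a check that $\mathbf A^\top\mathbf A$ is the orthogonal projector onto the row space (the paper only asserts this), and the observation that $\Delta\mathbf Y$ is exact because $\mathbf Y$ is affine in $\mathbf B$.
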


\textbf{Discussion.} Theorem~\ref{lemma_main:proj_energy_gate} reveals: (1) Updating $\mathbf{B}$ changes the output $\mathbf{Y}$ exactly along the steepest descent direction $-\frac{\partial \mathcal L}{\partial \mathbf Y}$; (2) The magnitude of this change is modulated by the projection energy $E=\|\mathbf A\mathbf X^\top\|_2^2$, which quantifies how much the input feature $\mathbf{X}$ lies in the subspace selected by $\mathbf{A}$. The detailed proof of Theorem~\ref{lemma_main:proj_energy_gate} is in Appx.~\ref{appx:proof_proj_energy_gate}.

\subsection{Low-Rank Subspace Decomposition}

Theorem~\ref{lemma_main:proj_energy_gate} indicates that the LoRA's learning capacity is determined by the magnitude of $\mathbf X$'s component in the subspace spanned by $\mathbf A$. This motivates controlling plasticity on new data and interference with past tasks by designing the down-projection $\mathbf A$ in a task-driven manner. Following this, we introduce a dual-branch LoRA with complementary roles, consisting of a general branch $\text{LoRA}_G$ (matrices $\mathbf A_G,\mathbf B_G$) and an isolated branch $\text{LoRA}_I$ (matrices $\mathbf A_I,\mathbf B_I$):
\begin{equation}\label{eq:two_branches}
\mathbf{Y} = \mathbf{X}(\mathbf{W}^{t-1} + w_G \mathbf{B}_G \mathbf{A}_G + \mathbf{B}_I \mathbf{A}_I)^\top,
\end{equation}
where $w_G$ is a hyper-parameter weighting the contribution of two branches. We fix $\mathbf{A}_G$ and $\mathbf{A}_I$ and build them via a data-driven low-rank decomposition of the current feature space: $\mathbf A_G$ spans a \textbf{\emph{general subspace}} $\mathcal{U}_G\triangleq\mathrm{span}(\mathbf{U}_G)$ that aligns with dominant directions accumulated from all tasks to enable knowledge sharing and inter-task transfer, while $\mathbf A_I$ spans an \textbf{\emph{isolated subspace}} $\mathcal{U}_I\triangleq\mathrm{span}(\mathbf{U}_I)$ that is strongly activated by the current task but weakly activated by past tasks, isolating task-specific update directions.


\textbf{General Subspace Bases.} Let $\mathbf{X}^i \in \mathbb{R}^{(N^iN^p) \times D}$ denote the concatenated input of task $i$, where $N^p$ denote the number of patch tokens per sample. Let $\mathbf{S}^i \triangleq \mathbf{X}^{i\top}\mathbf{X}^i \in \mathbb{R}^{D \times D}$ denote the second-moment statistics, and $\mathbf{S}^{1:t-1}\triangleq \sum_{i=1}^{t-1}\mathbf{S}^i$ denote the accumulated past statistics. To build a rank-$r$ general subspace, we seek $r$ orthonormal bases $\mathbf{U}_G$ such that the overall
projection energy $(E_{\text{old}}+E_{\text{new}})$ of both old and new tasks onto $\mathcal{U}_G=\mathrm{span}(\mathbf U_G)$ is maximized:
\vspace{-1mm}
\begin{equation}
\begin{aligned}
\mathbf{U}_G & = \arg\max_{\mathbf{U}^\top \mathbf{U} = \mathbf{I}_r} \big(\underbrace{\Vert \mathbf{X}^t \mathbf{U} \Vert_F^2}_{\textstyle E_{\text{new}}} + \underbrace{\sum\nolimits_{i=1}^{t-1} \Vert \mathbf{X}^i \mathbf{U} \Vert_F^2}_{\textstyle E_{\text{old}}}\big) \\
& = \arg\max_{\mathbf{U}^\top \mathbf{U} = \mathbf{I}_r} \mathrm{tr}\!\left(\mathbf{U}^\top \mathbf{S}^{1:t-1} \mathbf{U} + \mathbf{U}^\top \mathbf{S}^{t} \mathbf{U}\right), \\
\end{aligned}
\label{eq:general_subspace}
\end{equation}
where $\mathrm{tr}(\cdot)$ denotes the trace. Since the statistics satisfy $\mathbf S^{1:t-1}\succeq \mathbf 0$ and $\mathbf S^{t}\succeq \mathbf 0$, the solution is given by the top-$r$ singular vectors of $(\mathbf{S}^{1:t-1}+\mathbf{S}^t)$, where $\mathbf{U}_G = \mathbf{U}_{S[:, 1:r]}$ and  $\left[\mathbf{U}_S, \mathbf{\Sigma}_S, \mathbf{V}_S\right] = SVD(\mathbf{S}^{1:t-1} + \mathbf S^{t})$. $SVD(\cdot)$ denotes the Singular Value Decomposition (SVD) operator. A brief derivation is provided in Appx.\ref{appx:svd}.

\textbf{Isolated Subspace Bases.} Aiming to learn task-specific directions, we design an isolated subspace whose directions are strongly driven by task $t$ but weakly driven by past tasks $1{:}t\!-\!1$. Prior works~\cite{2021nullspace, infloraCVPR24, vptnspNIPS-24} derive the bases by approximating the \textbf{\emph{null space}} $\mathbf{U}_{\text{null}}$ of old tasks, where $\sum_{i=1}^{t-1}\|\mathbf{X}^i\mathbf{U}_{\text{null}}\| \approx 0$. However, real-world task streams can be highly correlated in representation space, and the estimated null space of past tasks can also be nearly inactive for the new task, leading to a small projection $\|\mathbf X^t \mathbf{U}_{\text{null}}\|$ and suboptimal subspace isolation.
\begin{figure}[!t]
\centering
\includegraphics[width=0.46\textwidth]{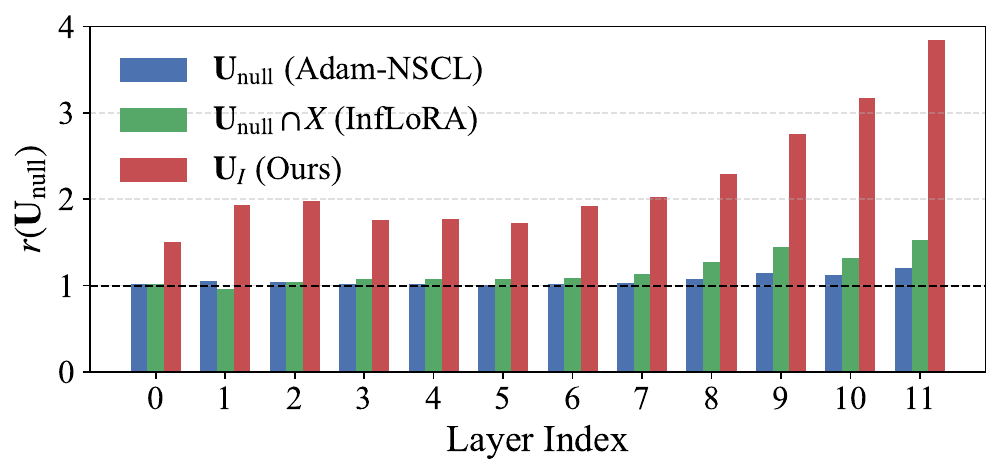}
\vspace{-1.0mm}
\caption{Averaged $r^t(\mathbf{U}_{\text{null}})$ in different layers on ImageNetA.
}\label{fig: projection_energy}
\vspace{-3.0mm}
\end{figure}
We empirically verify this issue on a challenging yet representative CL benchmark 10S-ImageNetA. We measure how much the new-task inputs project onto the
previous-task null space $\mathbf U_{\text{null}}$. We define the \textbf{\emph{relative projection energy}} $r^t(\mathbf{U}_{\text{null}})$ of $\mathbf U_{\text{null}}$ on task $t$ against all past tasks $1{:}t\!-\!1$ as:
\begin{equation}\label{eq:relative_projection_energy}
r^t(\mathbf{U}_{\text{null}})\triangleq \frac{\|\mathbf X^t \mathbf{U}_{\text{null}}\|_F^2 / \|\mathbf X^t\|_F^2}{\sum_{i=1}^{t-1}\|\mathbf X^i \mathbf{U}_{\text{null}}\|_F^2 / (\sum_{i=1}^{t-1}\|\mathbf X^i\|_F^2)}.
\end{equation}
As illustrated in Fig.~\ref{fig: projection_energy}, $r^t(\mathbf{U}_{\text{null}})$ is close to $1.0$, indicating that
\textbf{\emph{the estimated null-space directions are activated to a similar extent by both the current and previous tasks}}, and thus are not truly effective isolated bases for the new task.

Unlike these methods, we select the $r$-dimensional isolated directions $\mathbf U_I\in\mathbb R^{D\times r}$ by explicitly maximizing the ratio between (i) the projection energy $E_{\text{new}}$ of task $t$ and (ii) the accumulated energy $E_{\text{old}}$ of all previous tasks $1: t-1$:
\begin{equation}
\begin{aligned}
\mathbf{U}_I &= \arg\max_{\mathbf{U}\in\mathbb{R}^{D\times r}} \underbrace{\Vert \mathbf{X}^t \mathbf{U} \Vert_F^2}_{\textstyle E_{\text{new}}} / \underbrace{\left(\sum\nolimits_{i=1}^{t-1} \Vert \mathbf{X}^i \mathbf{U} \Vert_F^2\right)}_{\textstyle E_{\text{old}}} \\
& = \arg\max_{\mathbf{U}\in\mathbb{R}^{D\times r}} \frac{\mathrm{tr}(\mathbf{U}^\top \mathbf{S}^{t} \mathbf{U})}{\mathrm{tr}(\mathbf{U}^\top \mathbf{S}^{1:t-1} \mathbf{U})}.
\label{eq:isolated_subspace}
\end{aligned}
\end{equation}
\begin{theorem}[Computation of $\mathbf{U}_I$]
Since $\mathbf{S}^{1:t-1}$ is the Gram matrix of past features with a huge sample size $\sum_{i=1}^{t-1} N^i N^p \gg D$, we assume it is full rank thus $\mathbf{S}^{1:t-1}\succ \mathbf{0}$. Let $\mathbf{S}^{1:t-1}=\mathbf{L}\mathbf{L}^\top$ be its Cholesky factorization and define
$\Tilde{\mathbf{S}}^t\triangleq \mathbf{L}^{-1}\mathbf{S}^t({\mathbf{L}^{-1}})^{\top}$.
Then the optimal $\mathbf{U}_I$ of~\eqref{eq:isolated_subspace} is
\[
\mathbf{U}_I = ({\mathbf{L}^{-1}})^{\top}\Tilde{\mathbf{U}}_I,
\]
where $\Tilde{\mathbf{U}}_I\in\mathbb{R}^{D\times r}$ consists of \textbf{top-$r$ singular vectors of $\Tilde{\mathbf{S}}^t$}.
\end{theorem}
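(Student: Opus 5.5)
The plan is to whiten the problem with respect to the old-task statistics. This turns the generalized (two-matrix) trace ratio in~\eqref{eq:isolated_subspace} into an ordinary Rayleigh-type problem for the single symmetric matrix $\Tilde{\mathbf S}^t$. That problem is then solved by the same Ky Fan / top-$r$ eigenvector argument used for the general subspace in~\eqref{eq:general_subspace} (Appx.~\ref{appx:svd}).

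\emph{Step 1 (change of variables).} Since $\mathbf S^{1:t-1}\succ\mathbf 0$, the Cholesky factor $\mathbf L$ is invertible, so I set $\mathbf V\triangleq\mathbf L^\top\mathbf U$, equivalently $\mathbf U=(\mathbf L^{-1})^\top\mathbf V$. This is a bijection of $\mathbb R^{D\times r}$. Substituting gives
\begin{equation*}
\mathrm{tr}(\mathbf U^\top\mathbf S^{1:t-1}\mathbf U)=\mathrm{tr}(\mathbf V^\top\mathbf V),\qquad
\mathrm{tr}(\mathbf U^\top\mathbf S^{t}\mathbf U)=\mathrm{tr}(\mathbf V^\top\mathbf L^{-1}\mathbf S^t(\mathbf L^{-1})^\top\mathbf V)=\mathrm{tr}(\mathbf V^\top\Tilde{\mathbf S}^t\mathbf V).
\end{equation*}
So the objective becomes $\mathrm{tr}(\mathbf V^\top\Tilde{\mathbf S}^t\mathbf V)/\mathrm{tr}(\mathbf V^\top\mathbf V)$. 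The matrix $\Tilde{\mathbf S}^t$ is a congruence of $\mathbf S^t\succeq\mathbf 0$, hence symmetric PSD. Its SVD therefore coincides with its eigendecomposition $\Tilde{\mathbf S}^t=\Tilde{\mathbf U}\,\mathrm{diag}(\lambda_1\ge\dots\ge\lambda_D\ge0)\,\Tilde{\mathbf U}^\top$, so ``top-$r$ singular vectors'' means top-$r$ eigenvectors.

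\emph{Step 2 (fix the scale).} The ratio is invariant under $\mathbf V\mapsto c\mathbf V$ and depends on the subspace only through these traces. I therefore restrict to the natural $r$-dimensional normalization $\mathbf V^\top\mathbf V=\mathbf I_r$, i.e.\ $\mathbf U^\top\mathbf S^{1:t-1}\mathbf U=\mathbf I_r$, which means $r$ directions each carrying unit old-task energy and mutually uncorrelated on old data. Under this constraint the denominator is the constant $r$. The problem reduces to $\max_{\mathbf V^\top\mathbf V=\mathbf I_r}\mathrm{tr}(\mathbf V^\top\Tilde{\mathbf S}^t\mathbf V)$. By Ky Fan's maximum principle, exactly as in Appx.~\ref{appx:svd}, the maximum equals $\sum_{k\le r}\lambda_k$ and is attained at $\mathbf V=\Tilde{\mathbf U}_I=\Tilde{\mathbf U}_{[:,1:r]}$. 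It is unique up to right multiplication by an orthogonal $r\times r$ matrix when $\lambda_r>\lambda_{r+1}$. Mapping back gives $\mathbf U_I=(\mathbf L^{-1})^\top\Tilde{\mathbf U}_I$. Equivalently, the columns are the top-$r$ generalized eigenvectors of the pencil $(\mathbf S^t,\mathbf S^{1:t-1})$.

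\emph{Main obstacle.} The delicate point is Step 2. Without any constraint on $\mathbf U$, the trace ratio's supremum is $\lambda_1$. It is attained degenerately by putting every column along the top eigenvector, which is rank-deficient, so the unconstrained problem does not single out an $r$-dimensional subspace. I will make explicit that~\eqref{eq:isolated_subspace} is understood over $r$ linearly independent directions normalized by $\mathbf U^\top\mathbf S^{1:t-1}\mathbf U=\mathbf I_r$. This is the standard generalized-Rayleigh/LDA convention and is forced by invariance of the ratio to per-column scaling. I will also add the supporting remark that each column $\mathbf u_k$ individually attains the stationary per-direction ratio $\mathbf u_k^\top\mathbf S^t\mathbf u_k/\mathbf u_k^\top\mathbf S^{1:t-1}\mathbf u_k=\lambda_k$, the $k$-th largest achievable. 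This follows from the Courant--Fischer characterization applied in the whitened coordinates, and it justifies the choice as optimal direction by direction.
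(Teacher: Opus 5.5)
Your proposal is correct and follows the paper's proof essentially step for step: Cholesky whitening via $\tilde{\mathbf U}=\mathbf L^\top\mathbf U$, the normalization $\tilde{\mathbf U}^\top\tilde{\mathbf U}=\mathbf I_r$, Ky Fan applied to $\tilde{\mathbf S}^t$, and mapping back through $(\mathbf L^{-1})^\top$. Your Step 2 caveat is also right and sharper than the paper. The paper calls this normalization ``without loss of optimality,'' but the ratio is invariant only under scalar multiples of orthogonal right-multiplication, not under $\tilde{\mathbf U}\mapsto\tilde{\mathbf U}(\tilde{\mathbf U}^\top\tilde{\mathbf U})^{-1/2}$. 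Hence the unconstrained supremum is $\lambda_1$, which generally exceeds $\frac{1}{r}\sum_{k\le r}\lambda_k$, and treating $\mathbf U^\top\mathbf S^{1:t-1}\mathbf U=\mathbf I_r$ as part of the problem's definition, as you do, is the sound reading.
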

\begin{proof}
Using $\mathbf{S}^{1:t-1}=\mathbf{L}\mathbf{L}^\top$, we rewrite the objective as:
\begin{equation}
\label{eq:proof1}
\begin{aligned}
f(\mathbf{U})
=\frac{\mathrm{tr}\!\left((\mathbf{L}^\top\mathbf{U})^\top \mathbf{L}^{-1}\mathbf{S}^{t}
({\mathbf{L}^{-1}})^{\top}
(\mathbf{L}^\top\mathbf{U})\right)}
{\mathrm{tr}\!\left((\mathbf{L}^\top\mathbf{U})^\top(\mathbf{L}^\top\mathbf{U})\right)}.
\end{aligned}
\end{equation}
Define $\Tilde{\mathbf{S}}^t\triangleq \mathbf{L}^{-1}\mathbf{S}^{t}\mathbf{L}^{-\top}$ and make the substitution
$\Tilde{\mathbf{U}}\triangleq\mathbf{L}^\top\mathbf{U}$. Then Eq.\ref{eq:proof1} can be formulated as follows:
\begin{equation}
\label{eq:proof2}
f(\mathbf{U})
=\frac{\mathrm{tr}(\Tilde{\mathbf{U}}^\top \mathbf{L}^{-1} \mathbf{S}^t ({\mathbf{L}^{-1}})^{\top} \Tilde{\mathbf{U}})}
{\mathrm{tr}(\Tilde{\mathbf{U}}^\top\Tilde{\mathbf{U}})}
=\frac{\mathrm{tr}(\Tilde{\mathbf{U}}^\top \Tilde{\mathbf{S}}^t\Tilde{\mathbf{U}})}
{\mathrm{tr}(\Tilde{\mathbf{U}}^\top\Tilde{\mathbf{U}})}.
\end{equation}
Since Eq.\ref{eq:proof2} is invariant to the scale of $\Tilde{\mathbf{U}}$ and depends only on the directions of $\Tilde{\mathbf{U}}$, we impose the normalization
$\Tilde{\mathbf{U}}^\top\Tilde{\mathbf{U}}=\mathbf{I}_r$ without loss of optimality. Then the objective becomes:
\begin{equation}
\max_{\Tilde{\mathbf{U}}\in\mathbb{R}^{D\times r}}
\ \mathrm{tr}(\Tilde{\mathbf{U}}^\top \Tilde{\mathbf{S}}^t\Tilde{\mathbf{U}})
\quad\text{s.t.}\quad
\Tilde{\mathbf{U}}^\top\Tilde{\mathbf{U}}=\mathbf{I}_r.
\label{eq:kyfan_form}
\end{equation}
The maximum is attained when the columns of $\Tilde{\mathbf{U}}$ span the top-$r$ singular vectors of $\Tilde{\mathbf{S}}^t$, where $\Tilde{\mathbf{U}}_I = \Tilde{\mathbf{U}}_{S[:,1:r]}$ and $\left[\Tilde{\mathbf{U}}_S, \Tilde{\mathbf{\Sigma}}_{S}, \Tilde{\mathbf{V}}_S \right] = SVD(\Tilde{\mathbf{S}}^t)$ similar to Eq.\ref{eq:general_subspace}. Finally, by substituting back $\Tilde{\mathbf{U}}=\mathbf{L}^\top\mathbf{U}$, we derive $\mathbf{U}_I=({\mathbf{L}^{-1}})^{\top}\Tilde{\mathbf{U}}_I$.
\end{proof}

\subsection{LoRA Optimization}

\textbf{Anchoring Down-Projection Matrices.}
We initialize $\mathbf{A}_G$ and $\mathbf{A}_I$ using the derived low-rank subspace bases:
\begin{equation}
\mathbf{A}_G \leftarrow \mathbf{U}_G^\top,
\quad
\mathbf{A}_I \leftarrow \mathrm{QR}\big(\mathbf{U}_I^\top\big),
\label{eq:lora_init}
\end{equation}
where $\mathbf{U}_G\in\mathbb{R}^{D\times r}$ and $\mathbf{U}_I\in\mathbb{R}^{D\times r}$ are the bases of the general and isolated subspaces, respectively. $\mathrm{QR}(\cdot)$ denotes the thin-QR factorization that returns the orthonormal factor, so that $\mathbf{A}_I$ has orthonormal rows while preserving $\mathrm{span}(\mathbf{A}_I)=\mathrm{span}(\mathbf{U}_I)$ for stable optimization. $\mathbf{A}_G$ and $\mathbf{A}_I$ are then frozen throughout the optimization on $\mathcal{D}^t$.

\begin{figure}[!t]
\centering
\includegraphics[width=0.48\textwidth]{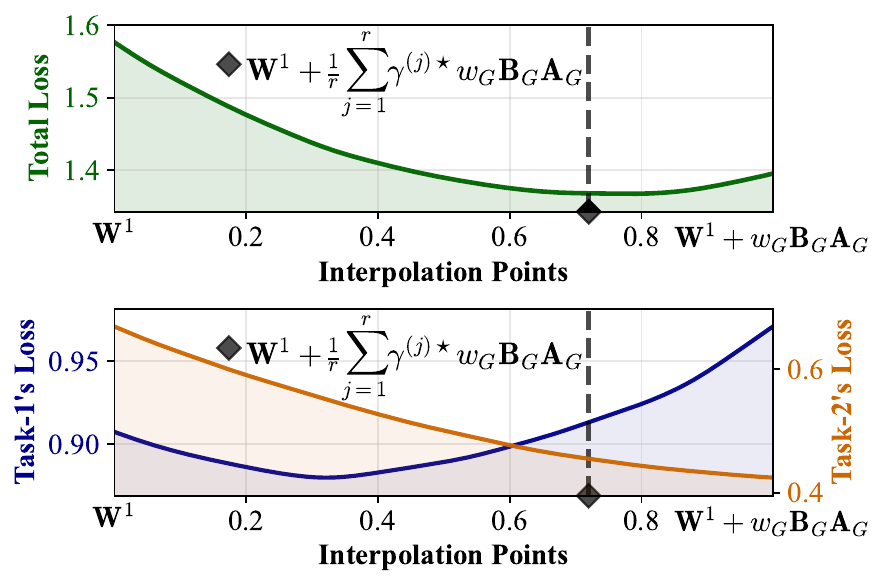}
\vspace{-6.0mm}
\caption{Loss values along the general direction (ImageNetA). 
}\label{fig: transfer_curve}
\vspace{-5.0mm}
\end{figure}

\textbf{Optimizing Up-Projection Matrices.} After anchoring $\mathbf{A}_G$ and $\mathbf{A}_I$, we further regularize the learnable parameters $\mathbf{B}_G$ and $\mathbf{B}_I$ to induce consistent gradients across diverse classes, so they generalize better and are less susceptible to interference from potential future classes. Specifically, we propose a within-task \textbf{\emph{Gradient-Aligned Optimization (GAO)}} algorithm. Given a batch $\mathcal{B}$, we split it into two label-disjoint subsets $\mathcal{B}_1$ and $\mathcal{B}_2$, GAO implicitly encourages their gradients to align by coupling their gradients.
At each step, it first takes a normalized inner update on parameter $\theta$ using the gradient from one subset, and then conducts optimization on the other subset under the perturbed parameter:
\begin{equation}
\label{eq:meta-learning}
\begin{aligned}
& \theta^{+} \leftarrow \theta - \eta \nabla_\theta \mathcal{L}\left(\theta - \rho \frac{\nabla_\theta \mathcal{L}(\theta, \mathcal{B}_2)}{\Vert \nabla_\theta \mathcal{L}(\theta, \mathcal{B}_2) \Vert_2^2}, \mathcal{B}_1\right), \\
& \theta^{++} \leftarrow \theta^{+} - \eta \nabla_\theta \mathcal{L}\left(\theta^{+} - \rho \frac{\nabla_\theta \mathcal{L}(\theta^{+}, \mathcal{B}_1)}{\Vert \nabla_\theta \mathcal{L}(\theta^{+}, \mathcal{B}_1) \Vert_2^2}, \mathcal{B}_2\right), \\
\end{aligned}
\end{equation}
where $\rho \sim \text{U}(0,\rho_{\max})$ is a randomized inner-step scale for robustness, and $\eta$ is the learning rate.
$\mathcal{L}(\theta, \mathcal{B})$ denotes the standard cross-entropy loss computed on $\mathcal{B}$. Intuitively, it enables learning one group under the gradient interference from another group, suppressing conflicting directions and promoting shared gradients across groups. We provide a theoretical analysis in Appx.~\ref{appx: theory_of_sgo} to show that GAO optimizes an explicit gradient alignment objective, while enhancing the anti-interference ability of learned parameters.

\begin{table*}[t]
\centering
\caption{Experimental results for 10 incremental tasks on ImageNetR, ImageNetA, CIFAR100, and CUB. We report the averaged results over 3 random seeds. The highest results are in \textbf{bold}, and the second highest results are \underline{underlined}.
``FR" denotes the use of feature replay.}
\vspace{-1mm}
\setlength{\tabcolsep}{2pt}
\scalebox{0.783}{
\begin{tabular}{lcccccccccc}
\toprule
\multirow{2}*{Method} &  \multirow{2}*{Venue} & \multirow{2}*{FR}
&\multicolumn{2}{c}{10S-ImageNetR} &\multicolumn{2}{c}{10S-ImageNetA} &\multicolumn{2}{c}{10S-CIFAR100} &\multicolumn{2}{c}{10S-CUB}\\
&{}&{}
&$\mathcal{A}_{Last} \uparrow $
&$\mathcal{A}_{Avg} \uparrow $
&$\mathcal{A}_{Last} \uparrow $
&$\mathcal{A}_{Avg} \uparrow $ 
&$\mathcal{A}_{Last} \uparrow $
&$\mathcal{A}_{Avg} \uparrow $ 
&$\mathcal{A}_{Last} \uparrow $
&$\mathcal{A}_{Avg} \uparrow $\\
\hline
L2P \cite{l2pCVPR22}  & CVPR’22 & \textbf{\ding{55}} &$72.34_{\pm 0.17}$& $77.36_{\pm 0.64}$ &$44.04_{\pm 0.93}$&$51.24_{\pm 2.26}$ &	   $84.06_{\pm 0.88}$ &$88.26_{\pm 1.34}$ &$67.02_{\pm 1.90}$ &$79.62_{\pm 1.60}$\\
DualPrompt \cite{dualpromptECCV22}  & ECCV’22 & \textbf{\ding{55}} & $69.10_{\pm 0.62}$ &  $74.28_{\pm 0.66}$ &$53.19_{\pm 0.74}$ &  $64.59_{\pm 0.08}$ &  $86.93_{\pm 0.24}$ &$91.13_{\pm 0.32}$ &  $68.48_{\pm 0.47}$ &$80.59_{\pm 1.50}$ \\
CODAPrompt \cite{codaCVPR23}  & CVPR’23 & \textbf{\ding{55}} & $73.31_{\pm 0.50}$ &  $78.47_{\pm 0.53}$ &$52.08_{\pm 0.12}$ &  $63.92_{\pm 0.12}$ &  $83.21_{\pm 3.39}$ &$87.71_{\pm 3.17}$  &  $77.23_{\pm 1.12}$ &$81.90_{\pm 0.85}$  \\
InfLoRA \cite{infloraCVPR24} & CVPR'24 & \textbf{\ding{55}} & $74.75_{\pm 0.64}$ & $80.67_{\pm 0.55}$ & $49.20_{\pm 1.12}$ & $60.92_{\pm 0.61}$ & $86.75_{\pm 0.35}$ & $91.72_{\pm 0.15}$ & $70.82_{\pm 0.23}$ & $81.39_{\pm 0.14}$ \\
SD-LoRA \cite{sdloraICLR25} & ICLR'25 & \textbf{\ding{55}} & $77.34_{\pm 0.35}$ & $82.04_{\pm 0.24}$ & $55.96_{\pm 0.73}$ & $64.95_{\pm 1.63}$ & $88.01_{\pm 0.31}$ & $92.54_{\pm 0.18}$ & $77.48_{\pm 0.20}$ & $85.59_{\pm 0.44}$ \\
Bi-LoRA \cite{biloraCVPR25} & CVPR'25 & \textbf{\ding{55}} & ${77.95}_{\pm 0.14}$ &${81.52}_{\pm 0.26}$ & -- & -- & ${87.46}_{\pm 0.76}$ &${92.50}_{\pm 0.62}$ & -- & --\\
PLAN \cite{planICCV25} & ICCV'25 & \textbf{\ding{55}} & ${75.25}_{\pm 0.42}$ &${80.41}_{\pm 0.56}$ & -- & -- & ${87.54}_{\pm 0.31}$ &${92.21}_{\pm 0.35}$ & -- & -- \\
LoRA-P\&M \cite{lorapmNIPS25} & NIPS'25 & \textbf{\ding{55}} & $79.95_{\pm 0.18}$ & $85.29_{\pm 0.93}$ & $\underline{56.57}_{\pm 0.78}$ & $\underline{65.35}_{\pm 1.81}$ & $88.45_{\pm 0.35}$ & $92.89_{\pm 1.13}$ & $\underline{78.29}_{\pm 0.50}$ & $\underline{83.39}_{\pm 0.61}$ \\
CoSO \cite{cosoNIPS25} & NIPS'25 & \textbf{\ding{55}} & $\underline{81.10}_{\pm 0.39}$ &$\underline{85.56}_{\pm 0.13}$ & -- & -- & $\underline{88.77}_{\pm 0.16}$ &$\underline{92.99}_{\pm 0.23}$ & -- & -- \\
\midrule
\textbf{LoDA (Ours)} & -- & \textbf{\ding{55}} & {$\mathbf{{81.93}_{\pm 0.20}}$} & {$\mathbf{{86.90}_{\pm 0.40}}$} & $\mathbf{{62.59}_{\pm 0.64}}$ & $\mathbf{{70.87}_{\pm 1.61}}$ & $\mathbf{{90.47}_{\pm 0.06}}$ & $\mathbf{{93.46}_{\pm 1.42}}$ & $\mathbf{{81.74}_{\pm 0.78}}$ & $\mathbf{{89.35}_{\pm 0.98}}$ \\
\bottomrule
SLCA \cite{slcaICCV23} & ICCV’23 & \textbf{\ding{51}} & $79.35_{\pm 0.28}$	&$83.29_{\pm 0.46}$& $61.05_{\pm 0.63}$&	$68.88_{\pm 2.31}$ &  $ 91.26_{\pm 0.37}$  & $94.29_{\pm 0.92}$ &$84.68_{\pm 0.09}$&	$90.77_{\pm 0.79}$ \\
SSIAT \cite{ssaitCVPR24}  & CVPR’24 & \textbf{\ding{51}}  &$79.38_{\pm 0.59}$ &  ${83.63}_{\pm 0.43}$ &${62.43}_{\pm 1.63}$ &  ${70.83}_{\pm 1.63}$ &  $91.35_{\pm 0.26}$ & ${94.35}_{\pm 0.60}$ &  $88.75_{\pm 0.38}$ &${93.00}_{\pm 0.90}$ \\
VQ-Prompt \cite{vqprompt-nips24} & NIPS'24 & \textbf{\ding{51}} & $75.68_{\pm 0.23}$ & $80.02_{\pm 0.18}$ & -- & -- & $90.27_{\pm 0.06}$ & $93.10_{\pm 0.84}$ & $86.47_{\pm 0.40}$ & $91.37_{\pm 0.54}$ \\
DIA \cite{DIA-cvpr25} & CVPR'25 & \textbf{\ding{51}} & \multicolumn{1}{l}{$79.03$} & \multicolumn{1}{l}{$85.61$} & \multicolumn{1}{l}{$61.69$} & \multicolumn{1}{l}{$\underline{71.58}$} & \multicolumn{1}{l}{$90.80$} & \multicolumn{1}{l}{$94.29$} & \multicolumn{1}{l}{$86.73$} & \multicolumn{1}{l}{$93.21$} \\
MACIL \cite{macilICML2025} & ICML'25 & \textbf{\ding{51}} &$\underline{81.88}_{\pm 0.07}$ &  $\underline{85.95}_{\pm 0.27}$ &$\underline{{64.14}}_{\pm 0.58}$ &  {${71.45}_{\pm 1.35}$} &  {$\underline{91.94}_{\pm 0.17}$} & $\underline{94.43}_{\pm 0.79}$ &  {$\underline{90.52}_{\pm 0.13}$} &{$\underline{93.93}_{\pm 0.47}$} \\
\midrule
\textbf{LoDA (Ours) + CA} & -- & \textbf{\ding{51}} & {$\mathbf{{82.55}_{\pm 0.55}}$} & {$\mathbf{{87.18}_{\pm 0.42}}$} & {$\mathbf{{66.71}_{\pm 0.75}}$} & {$\mathbf{{73.89}_{\pm 1.22}}$} & {$\mathbf{{92.15}_{\pm 0.21}}$} & {$\mathbf{{94.70}_{\pm 1.16}}$} & {$\mathbf{{90.67}_{\pm 0.17}}$} & {$\mathbf{{93.97}_{\pm 0.56}}$} \\
\bottomrule
\end{tabular}}
\vspace{-1mm}
\label{tab: sota-tab1}
\end{table*}

\begin{table*}[t]
\centering
\caption{Experimental results for 5 and 20 incremental sessions on ImageNetR, 20 sessions on ImageNetA, and 10 sessions on DomainNet.}
\vspace{-1mm}
\setlength{\tabcolsep}{2pt}
\scalebox{0.783}{
\begin{tabular}{lcccccccccc}
\toprule
\multirow{2}*{Method} 
& \multirow{2}*{Venue}
& \multirow{2}*{FR}
&\multicolumn{2}{c}{5S-ImageNetR}
&\multicolumn{2}{c}{20S-ImageNetR} &\multicolumn{2}{c}{20S-ImageNetA}
&\multicolumn{2}{c}{10S-DomainNet} \\
&{}&{}
&$\mathcal{A}_{Last} \uparrow $
&$\mathcal{A}_{Avg} \uparrow $
&$\mathcal{A}_{Last} \uparrow $
&$\mathcal{A}_{Avg} \uparrow $ 
&$\mathcal{A}_{Last} \uparrow $
&$\mathcal{A}_{Avg} \uparrow $ 
&$\mathcal{A}_{Last} \uparrow $
&$\mathcal{A}_{Avg} \uparrow $\\
\hline
L2P \cite{l2pCVPR22} & CVPR'22 &\textbf{\ding{55}} & $70.83_{\pm 0.58}$	& $78.34_{\pm 0.47}$ & $69.64_{\pm 0.42}$	& $75.28_{\pm 0.57}$ & $40.48_{\pm 1.78}$ & $49.62_{\pm 1.46}$ & $81.17_{\pm 0.83}$	& $87.43_{\pm 0.95}$ \\
DualPrompt \cite{dualpromptECCV22} & ECCV'22 &\textbf{\ding{55}} & $ 73.05_{\pm 0.50}$ & $79.47_{\pm 0.40}$ & $66.61_{\pm 0.58}$ & $72.45_{\pm 0.37}$ & $42.28_{\pm 1.94}$ & $53.39_{\pm 1.64}$ & $81.70_{\pm 0.78}$	& $87.80_{\pm 0.99}$ \\
CODAPrompt \cite{codaCVPR23} & CVPR'23 &\textbf{\ding{55}} & $74.91_{\pm 0.33}$ & $79.25_{\pm 0.53}$ & $69.96_{\pm 0.50}$ & $75.34_{\pm 0.85}$ & $44.62_{\pm 1.92}$ & $54.86_{\pm 0.50}$ & $80.04_{\pm 0.79}$	&$86.27_{\pm 0.82}$\\
InfLoRA \cite{infloraCVPR24} & CVPR-24 & \textbf{\ding{55}} & $77.52_{\pm 0.37}$ & $82.01_{\pm 0.12}$ & $71.01_{\pm 0.45}$ & $77.28_{\pm 0.45}$ & $46.81_{\pm 2.30}$ & $57.13_{\pm 0.95}$ & $81.45_{\pm 0.68}$	& $87.55_{\pm 0.57}$ \\ 
SD-LoRA \cite{sdloraICLR25} & ICLR-25 & \textbf{\ding{55}} & $79.15_{\pm 0.20}$ & $83.01_{\pm 0.42}$ & $75.26_{\pm 0.37}$ & $80.22_{\pm 0.72}$ & $49.12_{\pm 1.78}$ & $59.63_{\pm 1.57}$ & $81.01_{\pm 0.42}$ & $86.85_{\pm 0.40}$ \\
Bi-LoRA \cite{biloraCVPR25} & CVPR-25 & \textbf{\ding{55}} & $78.32_{\pm 0.37}$ & $83.31_{\pm 0.52}$ & $72.41_{\pm 0.65}$ & $79.28_{\pm 0.76}$ & -- & -- & $76.56_{\pm 0.41}$ & $83.20_{\pm 0.35}$ \\
LoRA-P\&M \cite{lorapmNIPS25} & NIPS-25 & \textbf{\ding{55}} & $81.47_{\pm 0.56}$ & $85.96_{\pm 0.52}$ & $76.37_{\pm 0.09}$ & $82.77_{\pm 0.31}$ & $\underline{52.27}_{\pm 1.84}$ & $\underline{62.15}_{\pm 1.33}$ & $\underline{84.35}_{\pm 0.48}$ & $\underline{89.43}_{\pm 0.27}$ \\
CoSO \cite{cosoNIPS25} & NIPS-25 & \textbf{\ding{55}} & $\underline{82.10}_{\pm 0.13}$ & $\underline{86.38}_{\pm 0.07}$ & $\underline{78.19}_{\pm 0.28}$ & $\underline{83.69}_{\pm 0.12}$ & -- & -- & -- & -- \\
\midrule
\textbf{LoDA (Ours)} & -- & \textbf{\ding{55}} & {$\mathbf{{83.37}_{\pm 0.14}}$} & {$\mathbf{{87.46}_{\pm 0.41}}$} & {$\mathbf{{78.96}_{\pm 0.23}}$} & {$\mathbf{{84.94}_{\pm 0.17}}$} & {$\mathbf{{55.74}_{\pm 1.89}}$} & {$\mathbf{{65.54}_{\pm 2.43}}$} 
& {$\mathbf{{85.49}_{\pm 0.11}}$} & {$\mathbf{{90.15}_{\pm 0.42}}$} \\
\bottomrule
SLCA \cite{slcaICCV23}  & ICCV-23 & \textbf{\ding{51}} & $81.01_{\pm 0.11}$& $84.18_{\pm 0.28} $ & $74.63_{\pm 1.55}$& $79.92_{\pm 1.29} $ &  $36.69_{\pm 21.31}$ & $56.35_ {\pm 7.09}$ & ${84.35}_{\pm 0.72}$	& ${88.56}_{\pm 0.21}$ \\
SSIAT \cite{ssaitCVPR24} & CVPR-24 & \textbf{\ding{51}}& $80.52_{\pm 0.07}$ & ${84.25}_{\pm 0.31}$ & $75.67_{\pm 0.24}$ & ${82.30}_{\pm 0.36}$ & ${59.16}_{\pm 1.03}$ & {$\underline{68.45}_{\pm 1.92}$} & ${85.11}_{\pm 0.56}$	& ${89.80}_{\pm 0.34}$\\ 
MACIL \cite{macilICML2025} & ICML-25 & \textbf{\ding{51}} & $\underline{83.37}_{\pm 0.26}$ & $\underline{87.00}_{\pm 0.35}$ & $\underline{79.43}_{\pm 0.34}$ & $\underline{84.34}_{\pm 0.32}$ & $\underline{59.36}_{\pm 1.27}$ & $68.03_{\pm 2.00}$ & {$\underline{86.62}_{\pm 0.35}$} & {$\underline{90.88}_{\pm 0.20}$}\\
\midrule
\textbf{LoDA (Ours) + CA} & -- & \textbf{\ding{51}} & {$\mathbf{{83.69}_{\pm 0.38}}$} & {$\mathbf{{87.45}_{\pm 0.42}}$} & {$\mathbf{{81.13}_{\pm 0.42}}$} & {$\mathbf{{86.26}_{\pm 0.20}}$} & {$\mathbf{{64.47}_{\pm 1.27}}$} & {$\mathbf{{72.30}_{\pm 1.78}}$} & {$\mathbf{{87.33}_{\pm 0.36}}$} & {$\mathbf{{91.40}_{\pm 0.17}}$} \\
\bottomrule
\end{tabular}}
\label{tab: sota-tab2}
\vspace{-0.5mm}
\end{table*}

\subsection{LoRA Recalibration and Integration}

After training, we recalibrate $\text{LoRA}_G$ via a closed-form matrix that minimizes feature optimization error over tasks $1$:$t$. We then merge updates of two branches into the backbone to reach a joint low-loss region for old and new tasks.

Since the general low-rank branch is designed to capture update directions that are
highly activated by both the new task and previous tasks. Therefore, directly adding the optimal update for $\mathcal{D}^t$ leads to uncontrolled representation drifts of past tasks and causes forgetting. To address this, we derive an optimal factor to recalibrate each rank-1 unit in $\text{LoRA}_G$ that minimizes feature-level optimization error. Firstly, we decompose the $r$-rank LoRA $\{\mathbf{B}_G, \mathbf{A}_G\}$ into $r$ rank-1 units $\{\mathbf{B}_G^{(j)}, \mathbf{A}_G^{(j)}\}_{j=1}^r$, where $\mathbf{B}_G \mathbf{A}_G = \sum_{j=1}^{r} \mathbf{B}_G^{(j)} \mathbf{A}_G^{(j)}$. For each rank-1 unit, we replace its up-projection matrix $\mathbf{B}_G^{(j)}$ by an optimal $\mathbf{B}_G^{(j)\star}$ that achieves the minimal optimization error of both the new task and all previous tasks. The objective of $\mathbf{B}_G^{(j)\star}$ can be formulated as follows:
\begin{equation}
\label{eq:merge_obj}
\begin{aligned}
\min_{\mathbf{B}_G^{(j)\star}}\quad
& \lambda \left\Vert \mathbf{X}^t (\mathbf{B}_G^{(j)\star} \mathbf{A}_G^{(j)})^\top
-\mathbf{X}^t (\mathbf{B}_G^{(j)} \mathbf{A}_G^{(j)})^\top
\right\Vert_F^2 \\
&+\sum\nolimits_{i=1}^{t-1}\left\Vert
\mathbf{X}^i (\mathbf{B}_G^{(j)\star}\mathbf{A}_G^{(j)})^\top
\right\Vert_F^2.
\end{aligned}
\end{equation}
The first term matches the LoRA outputs produced by $\mathbf{B}_G^{(j)\star}$ to those produced by the current optimal unit $\mathbf{B}_G^{(j)}$ on the current task $t$. The second term regularizes $\mathbf{B}_G^{(j)\star}$ by minimizing the joint feature optimization error induced by $\mathbf{B}_G^{(j)\star}$ on all previous tasks $1:t-1$. $\lambda$ is a hyper-parameter that balances  errors on the new task $\mathcal{D}^t$ and past tasks $\mathcal{D}^{1:t-1}$.

\begin{theorem}[Closed-form solution for Eq.\ref{eq:merge_obj}]
\label{lemma_main:merge_closed_form}
Let $\mathbf{S}^t = \mathbf{X}^{t\top} \mathbf{X}^t \in \mathbb{R}^{D \times D}$ and $\mathbf{S}^{1:t} = \sum_{i=1}^{t-1} \mathbf{X}^{i\top} \mathbf{X}^i \in \mathbb{R}^{D \times D}$, the optimal solution of Eq.\ref{eq:merge_obj} is given by:
\begin{equation}\label{eq: closed_form_matrix}
\mathbf{B}_G^{(j)\star} = \frac{\lambda \mathbf{A}_G^{(j)} \mathbf{S}^t \mathbf{A}_G^{(j)\top}}{\mathbf{A}_G^{(j)} (\lambda\mathbf{S}^t+ \mathbf{S}^{1:t-1}) \mathbf{A}_G^{(j)\top}} \mathbf{B}_G^{(j)}.
\end{equation}
For each unit, this closed-form solution rescales the optimal weight $\mathbf{B}_G^{(j)}$ of the $t$-th task by a data-dependent factor $\gamma^{(j)\star}$ without any approximations, where \[
\gamma^{(j)\star} \triangleq \frac{\lambda \mathbf{A}_G^{(j)} \mathbf{S}^t \mathbf{A}_G^{(j)\top}}{{\mathbf{A}_G^{(j)} (\lambda\mathbf{S}^t+ \mathbf{S}^{1:t-1}) \mathbf{A}_G^{(j)\top}}} \in [0,1].
\]
\end{theorem}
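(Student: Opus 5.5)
The idea is to reduce Eq.~\ref{eq:merge_obj} to a strictly convex quadratic in $\mathbf{b}\triangleq\mathbf{B}_G^{(j)\star}\in\mathbb{R}^{D'}$ and solve it through its first-order condition. Write $\mathbf{a}\triangleq\mathbf{A}_G^{(j)}\in\mathbb{R}^{1\times D}$ and $\mathbf{b}_0\triangleq\mathbf{B}_G^{(j)}$. For any $\mathbf{X}$, the rank-1 structure gives $\mathbf{X}(\mathbf{b}\mathbf{a})^\top=(\mathbf{X}\mathbf{a}^\top)\mathbf{b}^\top$, which is an outer product of the column vector $\mathbf{v}=\mathbf{X}\mathbf{a}^\top$ with $\mathbf{b}$. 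Using $\|\mathbf{v}\mathbf{u}^\top\|_F^2=\|\mathbf{v}\|_2^2\|\mathbf{u}\|_2^2$ and $\|\mathbf{X}\mathbf{a}^\top\|_2^2=\mathbf{a}\mathbf{X}^\top\mathbf{X}\mathbf{a}^\top$, I will write each term through the scalars
\[
s_t\triangleq\mathbf{a}\mathbf{S}^t\mathbf{a}^\top,\qquad s_{\text{old}}\triangleq\mathbf{a}\mathbf{S}^{1:t-1}\mathbf{a}^\top .
\]

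Substituting, the first term becomes $\lambda\|\mathbf{X}^t\mathbf{a}^\top(\mathbf{b}-\mathbf{b}_0)^\top\|_F^2=\lambda s_t\|\mathbf{b}-\mathbf{b}_0\|_2^2$. The second becomes $\sum_i\|\mathbf{X}^i\mathbf{a}^\top\mathbf{b}^\top\|_F^2=s_{\text{old}}\|\mathbf{b}\|_2^2$. The objective is therefore
\[
J(\mathbf{b})=\lambda s_t\|\mathbf{b}-\mathbf{b}_0\|_2^2+s_{\text{old}}\|\mathbf{b}\|_2^2,
\]
an isotropic quadratic. Setting $\nabla J=2\lambda s_t(\mathbf{b}-\mathbf{b}_0)+2s_{\text{old}}\mathbf{b}=0$ gives
\[
\mathbf{b}^\star=\frac{\lambda s_t}{\lambda s_t+s_{\text{old}}}\,\mathbf{b}_0 .
\]
The denominator equals $\mathbf{a}(\lambda\mathbf{S}^t+\mathbf{S}^{1:t-1})\mathbf{a}^\top$, so this is exactly Eq.~\ref{eq: closed_form_matrix}. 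No approximation enters, because the objective is exactly quadratic.

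For uniqueness and the range of $\gamma^{(j)\star}$, I note that $s_t,s_{\text{old}}\ge0$ since both are quadratic forms of PSD Gram matrices, and $\lambda>0$. Under the assumption from the computation of $\mathbf{U}_I$ that $\mathbf{S}^{1:t-1}\succ\mathbf{0}$, and since $\mathbf{a}\neq\mathbf{0}$ (it is a row of the orthonormal $\mathbf{A}_G$), we get $s_{\text{old}}>0$. Hence $J$ is strictly convex, the stationary point is the unique global minimizer, and the denominator is positive. Then $0\le\lambda s_t\le\lambda s_t+s_{\text{old}}$ gives $\gamma^{(j)\star}\in[0,1]$.

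The only real obstacle is the degenerate case where the denominator vanishes, which requires $s_t=s_{\text{old}}=0$. In that case $J\equiv0$ and every $\mathbf{b}$ is optimal, so I would either invoke the full-rank assumption to exclude it or adopt the convention $\gamma^{(j)\star}=0$.
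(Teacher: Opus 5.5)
Your proof is correct and is essentially the paper's argument: both reduce Eq.~\ref{eq:merge_obj} to a convex quadratic in $\mathbf{B}_G^{(j)\star}$ and solve the stationarity condition, with your rank-one identity $\|\mathbf{v}\mathbf{u}^\top\|_F^2=\|\mathbf{v}\|_2^2\|\mathbf{u}\|_2^2$ standing in for the paper's trace expansion. You also justify uniqueness and $\gamma^{(j)\star}\in[0,1]$, which the appendix proof leaves implicit; strict convexity needs only $\lambda s_t+s_{\text{old}}>0$, so you need not invoke $\mathbf{S}^{1:t-1}\succ\mathbf{0}$, which fails at $t=1$.
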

A detailed proof is in Appx.~\ref{appx:proof_merge_closed_form}. 

\textbf{Discussion.} We provide a toy example to show knowledge sharing in $\text{LoRA}_G$ in Fig.\ref{fig: transfer_curve}. We examine the losses along the general update by interpolating from $\mathbf{W}^1$ to $\mathbf{W}^{1}+w_G\mathbf{B}_G\mathbf{A}_G$ on 10S-ImageNetA. A mild step along this direction does not harm and can even reduce the loss of Task-1, indicating positive knowledge transfer. However, fully adding Task-2's optimum $w_G\mathbf{B}_G \mathbf{A}_G$ causes feature drifts and notably increases Task-1's loss. This motivates our closed-form recalibration, which deduces optimal rescaling factors $\{\gamma^{(j)\star}\}_{j=1}^r$ to reach a joint low-loss region for both old and new tasks.

For the isolated branch, its updates are highly beneficial for learning the new task, while previous tasks are insensitive to them. Therefore, we directly add them to the weights. The LoRA integration process is formulated as follows:
\begin{equation}
\mathbf{W}^t \leftarrow \mathbf{W}^{t-1} + w_G \mathbf{B}_G \mathbf{\Lambda}_G \mathbf{A}_G + \mathbf{B}_I \mathbf{A}_I,
\vspace{-1mm}
\end{equation}
where $\mathbf{\Lambda}_G$ is the diagonal rescaling matrix defined as $\mathbf{\Lambda}_G=\mathrm{diag}\left(\gamma^{(1)\star},\ldots,\gamma^{(j)\star},\ldots,\gamma^{(r)\star}\right)
\in \mathbb{R}^{r\times r}$. During inference, we use only the updated backbone weight $\mathbf{W}^{t}$ and discard all LoRA matrices, yielding an efficient inference stage with no extra parameters. The overall training pipeline is outlined in Algorithm \ref{alg:LoRSDA} in Appx.\ref{sec: pipeline}.

\section{Experiments}
\begin{table*}[t]
\centering
\caption{Ablation Studies on ImageNetR and ImageNetA datasets under 10 and 20 incremental sessions.}\label{tab: ablation-study}
\setlength{\tabcolsep}{2.7pt} 
\vspace{-1mm}
\scalebox{0.84}{
\begin{tabular}{cccccccccccc}
\toprule
\multirow{2}*{Idx} & 
\multirow{2}*{$\text{LoRA}_G$} &
\multirow{2}*{$\text{LoRA}_I$} & 
\multirow{2}*{GAO} &
\multicolumn{2}{c}{10S-ImageNetR} & 
\multicolumn{2}{c}{10S-ImageNetA} &
\multicolumn{2}{c}{20S-ImageNetR} &
\multicolumn{2}{c}{20S-ImageNetA}\\
& {} & {} & {} 
&$\mathcal{A}_{Last} \uparrow $
&$\mathcal{A}_{Avg} \uparrow $
&$\mathcal{A}_{Last} \uparrow $
&$\mathcal{A}_{Avg} \uparrow $ 
&$\mathcal{A}_{Last} \uparrow $
&$\mathcal{A}_{Avg} \uparrow $ 
&$\mathcal{A}_{Last} \uparrow $
&$\mathcal{A}_{Avg} \uparrow $\\
\hline
1& -- & -- & -- & $72.09_{\pm 1.02}$ & $81.02_{\pm 1.00}$ & $53.24_{\pm 1.10}$ & $65.14_{\pm 1.41}$ & $68.25_{\pm 1.18}$ & $78.34_{\pm 0.50}$ & $45.21_{\pm 5.51}$ & $60.79_{\pm 0.16}$\\
2& \textbf{\ding{51}} & -- & -- & $79.19_{\pm 0.26}$ & $85.18_{\pm 0.94}$ & $59.62_{\pm 1.87}$ & $68.84_{\pm 2.45}$ & $76.59_{\pm 0.39}$ & $83.26_{\pm 0.70}$ & $51.70_{\pm 0.31}$ & $62.23_{\pm 1.74}$\\
3& -- & \textbf{\ding{51}} & -- & $79.22_{\pm 0.50}$ & $85.22_{\pm 0.73}$ & $59.89_{\pm 0.68}$ & $68.97_{\pm 2.03}$ & $75.70_{\pm 0.78}$ & $83.11_{\pm 1.08}$ & $51.20_{\pm 0.73}$ & $61.97_{\pm 0.62}$\\
4& \textbf{\ding{51}} & \textbf{\ding{51}} & -- & $81.05_{\pm 0.22}$ & $86.20_{\pm 0.75}$ & $61.27_{\pm 1.30}$ & $69.84_{\pm 2.29}$ & $78.62_{\pm 0.46}$ & $84.47_{\pm 0.43}$ & $53.85_{\pm 0.79}$ & $63.71_{\pm 2.21}$\\
5& \textbf{\ding{51}} & \textbf{\ding{51}} & \textbf{\ding{51}} & {$\mathbf{81.93_{\pm 0.20}}$} & {$\mathbf{86.91_{\pm 0.40}}$} & {$\mathbf{62.59_{\pm 0.64}}$} & {$\mathbf{70.87_{\pm 1.61}}$} & {$\mathbf{78.96_{\pm 0.23}}$} & {$\mathbf{84.94_{\pm 0.17}}$} & {$\mathbf{55.74_{\pm 1.89}}$} & {$\mathbf{65.54_{\pm 2.43}}$}\\
\bottomrule
\end{tabular}}
\vspace{-4.0mm}
\end{table*}

\begin{figure*}[h]
\centering
\includegraphics[width=1.00\textwidth]{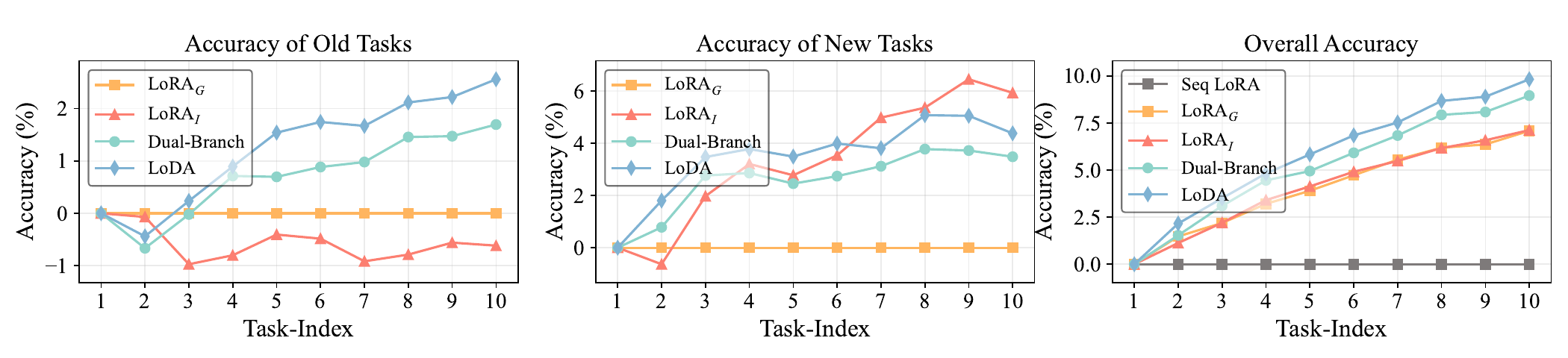}
\vspace{-4.0mm}
\caption{Relative accuracy curve of old tasks, new tasks and the overall performances on 10S-ImageNetR.}\label{fig: accuracy_curve}
\vspace{-4.0mm}
\end{figure*}

\textbf{Datasets and Evaluation Metrics.} We evaluate our method on five datasets, namely ImageNetR~\cite{imagenet-r}, ImageNetA~\cite{imagenet-a}, CIFAR-100~\cite{cifar100}, CUB~\cite{CUB} and DomainNet~\cite{domainet}. ImageNet-R contains 30,000 images from 200 categories and diverse styles. ImageNet-A includes 7,500 real-world adversarial examples, spanning 200 classes. CIFAR-100 is a standard CL benchmark, including 60,000 32×32 images from 100 classes. DomainNet consists images from 345 classes across 6 visual domains. Following prior works~\cite{cpromptCVPR24}, we select the top 200 most populous classes for evaluation. 

Following existing PEFT-based methods~\cite{ssaitCVPR24, macilICML2025}, we report results using 2 metrics: (1) Last accuracy ($\mathcal{A}_{{last}}$), defined as the accuracy over all classes after the final session and (2) Average accuracy ($\mathcal{A}_{Av g}$), defined as the mean accuracy across all incremental sessions.

\textbf{Implementation Details.} All experiments are conducted on a single NVIDIA RTX 3090 GPU. We use ViT-B/16~\cite{ViT} pre-trained on ImageNet-21K~\cite{imagenet-21k} as the backbone. The model is optimized with an SGD optimizer with a Cosine Annealing scheduler. We train each task for 20 epochs with a batch size of 48. The initial learning rate is set to 0.02 for ImageNetR, 0.005 for CUB, and 0.01 for other datasets. Following~\cite{macilICML2025}, we insert LoDA into all attention blocks with rank 32, and employ a cosine classifier for training the classification task. The hyper-parameters $w_G$ in Eq.\ref{eq:two_branches} is set to 0.5 and $\lambda$ in Eq.\ref{eq:merge_obj} is set to 3.0. $\rho_{\max}$ for GAO is set to 0.7 for ImageNetR and 0.3 for other datasets.

\textbf{LoDA+CA: an Enhanced Variant with Feature Replay.} In PEFT-based continual learning, performance is often sensitive to the classifier design, particularly when Gaussian sampling-based feature replay is used. To isolate this factor and ensure fair comparison, we implement two protocols within the same framework: (1) \textbf{LoDA} follows the standard end-to-end training without any post-hoc classifier adjustment; (2) \textbf{LoDA+CA} is an enhanced variant that integrates the widely used Classifier Alignment (CA) technique~\cite{slcaICCV23} into the training pipeline.

\subsection{Comparison with the State-of-the-Arts}

We compare LoDA with state-of-the-art PEFT-based approaches, including prompt-based methods~\cite{l2pCVPR22, vqprompt-nips24, dualpromptECCV22}, adapter-based methods~\cite{ssaitCVPR24, laeICCV23} and LoRA-based methods~\cite{planICCV25, sdloraICLR25, infloraCVPR24, cosoNIPS25, lorapmNIPS25} on five CL benchmarks with various settings, as shown in Tab.\ref{tab: sota-tab1} and Tab.\ref{tab: sota-tab2}. ``FR'' denotes the use of feature replay techniques. 

The results demonstrate the superiority of our method. Without feature replay, LoDA consistently surpasses the strongest approach CoSO by 0.80\%--1.70\% in $A_{Last}$. With classifier alignment, LoDA+CA achieves the best performance across all settings, outperforming the feature replay-based SOTA MACIL by 0.15\%--5.11\% in $A_{Last}$. Notably, the gains are substantial on the more challenging ImageNetA and ImageNetR datasets, where backbone features alone are insufficient for discrimination, but smaller on easier datasets such as CIFAR100 and CUB, where strong pre-trained representations reduce the need for backbone adaptation.



\subsection{Ablation Studies}

We conduct ablation studies on components of the LoDA framework, including the dual-branch LoRA module ($i.e.$, $\text{LoRA}_G$ \& $\text{LoRA}_I$) and the GAO training approach. The results are shown in Tab.\ref{tab: ablation-study}. We also built a naive baseline (Idx-1 in Tab.~\ref{tab: ablation-study}), which sequentially adapts the PTM using the standard single-branch LoRA module. To further analyze its effectiveness in balancing stability and plasticity, we visualize the accuracy curves of old tasks, the new task, and overall performance, as shown in Fig.~\ref{fig: accuracy_curve}. 



\textbf{The Effectiveness of the Dual-Branch LoRA.} As shown in Tab.~\ref{tab: ablation-study}, $\text{LoRA}_G$ (Idx-2) improves $\mathcal{A}_{Last}$ by 6.49\%--8.34\% compared to baseline (Idx-1). It restricts adaptation to the task-shared subspace and approximates a feature-level joint optimum, which well balances performances on old and new tasks. $\text{LoRA}_I$ (Idx-3) boosts $\mathcal{A}_{Last}$ by 5.99\%--7.45\% by seeking updates that maximize gains on new task while minimizing interference with past tasks. Notably, enabling both $\text{LoRA}_G$ and $\text{LoRA}_I$ (Idx-4) further boosts $\mathcal{A}_{Last}$ by 1.38\%--2.15\% over the best single-branch variant, highlighting their complementary roles. Fig.\ref{fig: accuracy_curve} provides further evidence: (i) $\text{LoRA}_G$ yields a stable shared update that accounts for performance over all tasks, reinforcing past knowledge; (ii) $\text{LoRA}_I$ explores novel directions beyond old feature spaces and merges them intact into the backbone, integrating new knowledge and yielding higher new-task accuracy.

\textbf{The Effectiveness of GAO.} Building on the dual-branch structure, GAO further improves  $\mathcal{A}_{Last}$ by 0.34\%--1.89\%. As shown in Fig.\ref{fig: accuracy_curve}, it increases accuracy of both old and new tasks with larger improvements on old tasks. We attribute this to the regularization of gradient consistency, which filters out conflicting gradients early and yields more robust update directions that are less prone to future interference.

\begin{table}[t]
\centering
\caption{Comparison with different subspace isolation methods.}
\label{tab: ablation-isolated}
\setlength{\tabcolsep}{4pt}
\scalebox{0.70}{
\begin{tabular}{llcc}
\toprule
Dataset & Method & $\mathcal{A}_{Last}\uparrow$ & $\mathcal{A}_{Avg}\uparrow$ \\
\midrule
\multirow{4}{*}{\begin{tabular}[c]{@{}c@{}}10S-\\ ImageNetR\end{tabular}}
& Random Orthogonal Bases & $68.23_{\pm 0.18}$ & $79.05_{\pm 0.62}$ \\
& Adam-NSCL~\cite{2021nullspace}    & $74.50_{\pm 0.77}$ & $82.29_{\pm 1.12}$ \\
& InfLoRA~\cite{infloraCVPR24}      & $75.15_{\pm 0.37}$ & $82.57_{\pm 0.84}$ \\
& \textbf{Relative Energy Maximization (Ours)} & $\mathbf{79.22_{\pm 0.50}}$ & $\mathbf{85.22_{\pm 0.73}}$ \\
\midrule
\multirow{4}{*}{\begin{tabular}[c]{@{}c@{}}10S-\\ ImageNetA\end{tabular}}
& Random Orthogonal Bases & $53.48_{\pm 1.47}$ & $64.34_{\pm 1.86}$ \\
& Adam-NSCL~\cite{2021nullspace} & $55.95_{\pm 1.84}$ & $66.94_{\pm 2.03}$ \\
& InfLoRA~\cite{infloraCVPR24}      & $57.10_{\pm 0.96}$ & $67.55_{\pm 1.94}$ \\
& \textbf{Relative Energy Maximization (Ours)} & $\mathbf{59.89_{\pm 0.68}}$ & $\mathbf{68.97_{\pm 2.03}}$ \\
\bottomrule
\end{tabular}}
\end{table}
\begin{table}[t]
\centering
\caption{Comparison of our closed-form rescaling for $\text{LoRA}_G$ with existing model merging strategies. ``CoMA ($w=\alpha$)" denotes the result when setting the merge coefficient in CoMA to $\alpha$.}
\label{tab: ablation-rescale}
\vspace{-1mm}
\setlength{\tabcolsep}{4pt}
\scalebox{0.71}{
\begin{tabular}{llcc}
\toprule
Dataset & Strategy & $\mathcal{A}_{Last}\uparrow$ & $\mathcal{A}_{Avg}\uparrow$ \\
\midrule
\multirow{5}{*}{\begin{tabular}[c]{@{}c@{}}10S-\\ ImageNetR\end{tabular}}
& CoMA ($w=0.1$)~\cite{CoFiMAECCV24} & $76.93_{\pm 0.65}$ & $81.89_{\pm 0.95}$ \\
& CoMA ($w=0.3$)~\cite{CoFiMAECCV24} & $78.39_{\pm 0.78}$ & $84.48_{\pm 1.01}$ \\
& CoMA ($w=0.5$)~\cite{CoFiMAECCV24} & $77.95_{\pm 0.55}$ & $84.30_{\pm 0.90}$ \\
& $\theta=\frac{t-1}{t}\theta_{t-1} + \frac{1}{t}\theta_t$~\cite{connector-CVPR22} & $78.43_{\pm 0.20}$ & $84.70_{\pm 1.06}$ \\
& \textbf{Closed-Form Rescaling ($\lambda=3.0$)} & $\mathbf{79.19_{\pm 0.26}}$ & $\mathbf{85.18_{\pm 0.94}}$ \\
& \textbf{Closed-Form Rescaling ($\lambda=10.0$)} & $\mathbf{78.96_{\pm 0.14}}$ & $\mathbf{85.10_{\pm 0.44}}$ \\
\midrule
\multirow{5}{*}{\begin{tabular}[c]{@{}c@{}}10S-\\ ImageNetA\end{tabular}}
& CoMA ($w=0.1$)~\cite{CoFiMAECCV24} & $58.11_{\pm 0.97}$ & $66.97_{\pm 2.15}$ \\
& CoMA ($w=0.3$)~\cite{CoFiMAECCV24} & $58.84_{\pm 0.67}$ & $68.46_{\pm 1.98}$ \\
& CoMA ($w=0.5$)~\cite{CoFiMAECCV24} & $57.84_{\pm 1.41}$ & $68.01_{\pm 1.56}$ \\
& $\theta=\frac{t-1}{t}\theta_{t-1} + \frac{1}{t}\theta_t$~\cite{connector-CVPR22} & $58.79_{\pm 1.11}$ & $68.20_{\pm 2.19}$ \\
& \textbf{Closed-Form Rescaling ($\lambda=3.0$)} & $\mathbf{59.62_{\pm 1.87}}$ & $\mathbf{68.84_{\pm 2.45}}$ \\
& \textbf{Closed-Form Rescaling ($\lambda=10.0$)} & $\mathbf{59.75_{\pm 1.21}}$ & $\mathbf{68.68_{\pm 1.34}}$ \\
\bottomrule
\end{tabular}}
\vspace{-5mm}
\end{table}

\subsection{In-Depth Analysis}

\textbf{Comparison with other task isolation methods.} To validate the rationality of constructing the isolated subspace by maximizing the relative projection energy $r(\mathbf{U})$, we compare the performance of $\text{LoRA}_I$ (w/o $\text{LoRA}_G$) with existing task isolation methods, as shown in Tab.\ref{tab: ablation-isolated}. Adam-NSCL~\cite{2021nullspace} and InfLoRA~\cite{infloraCVPR24} focus on directions that are weakly activated by past tasks while overlooking their activation on the new task, resulting in sub-optimal performances. In contrast, our objective (Eq.\ref{eq:isolated_subspace}) enforces a maximal activation contrast between the new and past tasks, deriving truly task-specific subspace.

\textbf{Closed-Form Rescaling \emph{v.s.} Existing Merging Strategies.} We compare performance of $\text{LoRA}_G$ (w/o $\text{LoRA}_I$) using our closed-form rescaling against existing merging strategies, as shown in Tab.\ref{tab: ablation-rescale}. Our feature-level objective in Eq.\ref{eq:merge_obj} admits an exact closed-form solution that minimizes feature degradation from both past- and new-task optima, avoiding extra gradient computation (CoMA~\cite{CoFiMAECCV24}) and local linearity approximations. Moreover, we assign a unique rescaling factor for each rank-1 unit, which is more informative than direct averaging ($\theta=\frac{t-1}{t}\theta_{t-1} + \frac{1}{t}\theta_t$).

\begin{figure}[!t]
\centering
\includegraphics[width=0.48\textwidth]{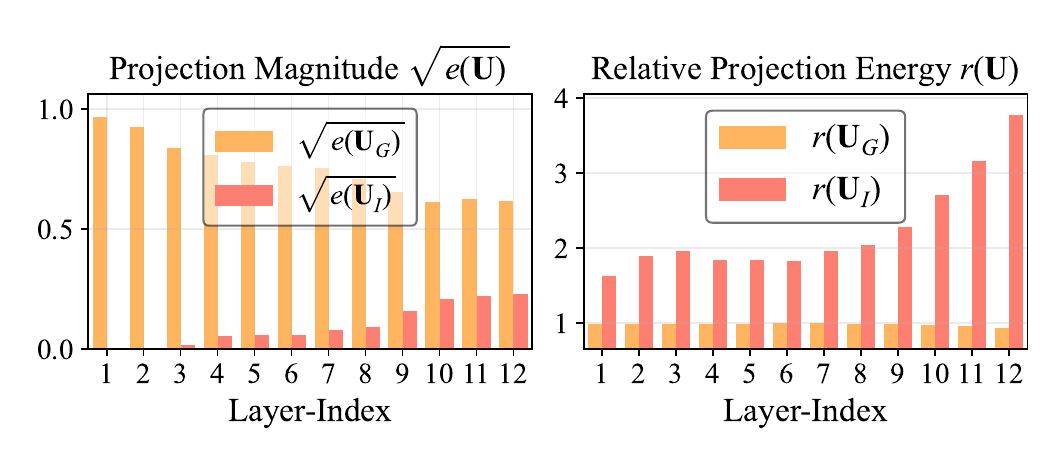}
\vspace{-3.0mm}
\caption{Projection magnitude and relative energy across different transformer layers on 10S-ImageNetA.}\label{fig: energy_bar}
\vspace{-1.0mm}
\end{figure}

\textbf{Visualization and Analysis of the Projection Energy on Two Subspaces.} We visualize normalized projection magnitude defined as $\sqrt{e(\mathbf{U})}=\sqrt{\lVert \mathbf{X}^t\mathbf{U}\rVert_F^2/\lVert \mathbf{X}^t\rVert_F^2}$ of the current task onto the two subspaces, and the new-to-past relative projection energy $r(\mathbf{U})$ defined in Eq.\ref{eq:relative_projection_energy} on 10S-ImageNetA, as shown in Fig.\ref{fig: energy_bar}. $\mathbf{U}_G$ dominates the projection energy across layers, indicating that most feature components lie in task-shared directions and providing evidence of strong task correlation. $\mathbf{U}_I$ exhibits a larger projection and higher relative energy in deeper layers, suggesting that deeper layers emphasize more task-specific features.

\begin{table}[t]
\centering
\caption{Computational and storage overhead on 20S-ImageNetR.}
\label{tab:storage-overhead}
\vspace{-1mm}
\setlength{\tabcolsep}{4pt}
\scalebox{0.69}{
\begin{tabular}{lcc|ccc}
\toprule
\multirow{2}{*}{Method} &
\multicolumn{2}{c|}{Training Stage} &
\multicolumn{3}{c}{Inference Stage} \\
\cmidrule(lr){2-6}
& \makecell{Extra\\Memory} & \makecell{Training\\GFLOPs}
& \makecell{Additional\\Params} & \makecell{Inference\\GFLOPs} & $\mathcal{A}_{Last}$ \\
\midrule
InfLoRA~\cite{infloraCVPR24} & 11.3MB & 17.67 & 0.00M & 17.60 & 71.01\\
SD-LoRA~\cite{sdloraICLR25} & 14.1MB & 20.51 & 0.00M & 17.60 & 75.26 \\
VPT-NSP~\cite{vptnspNIPS-24}    & 28.4MB  & 17.72 & 0.09M & 17.72 & 76.13 \\
VPT-CPG~\cite{vptcpg-AAAI25} & 28.4MB & 17.72 & 1.66M & 17.72 & 75.33 \\
\textbf{LoDA (Ours)} & 27.0MB & 18.06 & 0.00M & 17.60 & \textbf{78.96} \\
\bottomrule
\end{tabular}}
\vspace{-1mm}
\end{table}

\textbf{Overhead Analysis.} We report the computational and storage overhead of LoDA on 20S-ImageNetR in comparison with representative methods, as shown in Tab.\ref{tab:storage-overhead}. LoDA incurs a modest storage overhead by maintaining the cumulative statistics $\mathbf{S}^{1:t-1}\in\mathbb{R}^{L\times (D\times D)}$ in $L$ layers, which costs 27.0MB for ViT-B/16 with $L=12$ and $D=768$. This cost is \textbf{\emph{independent of the number of tasks}}, thus scaling well to long task streams. LoDA introduces some training overhead (18.06 GFLOPs) due to the dual-branch LoRA, but incurs no extra parameters and overhead for inference (0.00M, 17.60 GFLOPs), while achieving the best performances.
\section{Conclusion}

In this work, we show that LoRA’s learning capability is governed by task feature projection onto its down-projection subspace, providing a principled way to control knowledge sharing \& isolation in CL. Based on this insight, we propose LoDA, which decomposes the update space into general and truly isolated subspaces via two projection-energy-based objectives. LoDA then builds a dual-branch LoRA module that fixes down-projections on the decomposed bases and learns robust up-projections via Gradient-Aligned Optimization (GAO). After training, LoDA recalibrate the general branch via a closed-form rescaling matrix to approximate a feature-level joint optimum. Overall, LoDA offers a novel view for CL, showing that subspace-aware adaptation provides an effective paradigm for stability-plasticity trade-off.

\section*{Acknowledgements}

This work was supported in part by the National Key R\&D Program of China under Grant
No.2023YFA1008600, in part by the National Natural Science Foundation of China under Grants
62576262, U22A2096, in part by the Key Research and Development Program of Shaanxi Province
under grant 2024SF-YBXM-647, in part by the Fundamental Research Funds for the Central Universities under Grant QTZX25083, QTZX23042.

\section*{Impact Statement}

This paper presents work whose goal is to advance the field of machine learning. There are many potential societal consequences of our work, none of which we feel must be specifically highlighted here.
\bibliography{example_paper}
\bibliographystyle{icml2026}

\newpage
\appendix
\onecolumn
\renewcommand{\theequation}{\thesection.\arabic{equation}}
\counterwithin{equation}{subsection}

\section{Training Pipeline}\label{sec: pipeline}

\SetKwInput{KwInput}{Input}
\SetKwInput{KwOutput}{Output}
\SetKwInput{ForEach}{For}

The overall procedure of our algorithm is outlined in Algorithm \ref{alg:LoRSDA}:
(1) Before training, we run the frozen backbone $f(\cdot;\mathbf W^{t-1})$ on $\mathcal D^t$ to cache per-layer inputs and compute statistics $\mathbf S^t$. Using $\mathbf S^{1:t-1}$ and $\mathbf S^t$, we derive the general and isolated subspace bases $\mathbf{U}_G$ and $\mathbf{U}_I$, and initialize the LoRA down-projections $\mathbf{A}_G$ and $\mathbf{A}_I$. 
(2) During training, we optimize the up-projections $\mathbf{B}_G$ and $\mathbf{B}_I$ on $\mathcal D^t$ with the Gradient-Aligned Optimization (GAO) algorithm. (3) After training, we derive a closed-form matrix $\mathbf\Lambda_G$ to rescale each rank-1 unit in $\text{LoRA}_G$ , and merge both branches into the backbone. Finally, we update the cumulative statistics $\mathbf S^{1:t-1}$ by adding the current-task statistics $\mathbf S^t$.

\begin{algorithm2e}
\caption{Training pipeline of the proposed Low-rank Decomposition and Adaptation (LoDA).}
\label{alg:LoRSDA}
\DontPrintSemicolon
\SetAlgoNlRelativeSize{-1}
\KwInput{Task sequence $\{ \mathcal{D}^1, \dots, \mathcal{D}^T \}$, where $\mathcal{D}^t = \{\mathbf{x}^t_i, y^t_i\}_{i=1}^{N^t}$; ViT model $f(\cdot)$ with weight $\mathbf{W}^0$, two LoRA branches $\text{LoRA}_G = \{\mathbf A_G,\mathbf B_G\}$ and $\text{LoRA}_I = \{\mathbf A_I, \mathbf B_I\}$ in each layer; Cumulative task statistics $\mathbf{S}^{1:t-1}$ in each layer.}
\KwOutput{The adapted parameters $\mathbf{W}^T$ for each layer in the ViT model.}
Initialization: $\mathbf{S}^{1:t-1} \leftarrow \mathbf{0}^{D \times D}$.\\
\For{task $t=1$ to $T$}{
\begin{algblock}[colback=step1bg]
\tcp{Before Training: Task-Driven Subspace Decomposition}
Run $f(\mathbf{W}^{t-1};\cdot)$ on $\mathcal{D}^t$ and cache the input features of each layer;\;
\For{each layer in the ViT model}{
    Obtain the cached inputs $\mathbf{X} \in \mathbb{R}^{N^t N^p \times D}$, compute statistics $\mathbf{S}^t = \mathbf{X}^{\top} \mathbf{X}$; \;
    Derive $\mathbf{U}_G$ and $\mathbf{U}_I$ via optimization objectives in Eq.~\ref{eq:general_subspace} and Eq.~\ref{eq:isolated_subspace} using $\mathbf{S}^{1:t-1}$ and $\mathbf{S}^{t}$; \;
    Initialize $\text{LoRA}_G$: $\mathbf{A}_G \leftarrow \mathbf{U}_G^\top$, and $\mathbf{B}_G \leftarrow \mathbf{0}^{D' \times r}$;\;
    \uIf{$t > 1$}{
    Initialize $\text{LoRA}_I$:
    $\mathbf{A}_I \leftarrow \mathrm{QR}(\mathbf{U}_I^\top)$, and $\mathbf{B}_I \leftarrow \mathbf{0}^{D' \times r}$;\;
    }
    \Else{Initialize $\text{LoRA}_I$: $\mathbf{A}_I \leftarrow \mathbf{0}^{r \times D}$, and $\mathbf{B}_I \leftarrow \mathbf{0}^{D' \times r}$, freeze $\mathbf{B}_I$ ($\text{LoRA}_I$ \textbf{\emph{Disabled}});}
    Freeze backbone weight $\mathbf{W}$, down-projections $\mathbf{A}_G$ and $\mathbf{A}_I$; \;
}
\end{algblock}
\begin{algblock}[colback=step2bg]
\tcp{During Training: Optimize Up-Projections with GAO}
\Repeat
{Convergence on $\mathcal{D}^t$}{Sample a mini-batch $\mathcal{B}$ from $\mathcal{D}^t$, split it into two label-disjoint subsets $\mathcal{B}_1$ and $\mathcal{B}_2$;\;
Update trainable parameters by the two-step GAO algorithm in Eq.\ref{eq:meta-learning} using $\mathcal{B}_1$ and $\mathcal{B}_2$;}
\end{algblock}
\begin{algblock}[colback=step1bg]
\tcp{After Training: LoRA Recalibration and Integration}
\For{each layer in the ViT model}{
    Compute closed-form diagonal rescaling matrix $\mathbf{\Lambda}_G$ via Eq.\ref{eq: closed_form_matrix} using $\mathbf{A}_G$, $\mathbf{S}^{1:t-1}$ and $\mathbf{S}^{t}$; \;
    Update backbone weight $\mathbf{W}^t \leftarrow \mathbf{W}^{t-1} + w_G \mathbf{B}_G \mathbf{\Lambda}_G \mathbf{A}_G + \mathbf{B}_I \mathbf{A}_I$;\;
    Update accumulated task statistics $\mathbf{S}^{1:t-1} \leftarrow \mathbf{S}^{1:t-1} + \mathbf{S}^t$;\;
}
\end{algblock}
}
\end{algorithm2e}

\section{Theoretical Analysis}

\subsection{Proof of Theorem \ref{lemma_main:proj_energy_gate}}\label{appx:proof_proj_energy_gate}

\begin{theorem}\label{le mma_appx:proj_energy_gate} (Theorem~\ref{lemma_main:proj_energy_gate})
Let $\mathcal{L}(\mathbf{Y})$ be a differentiable loss for $\mathbf{Y}$. Fix $\mathbf A\in\mathbb R^{r\times D}$ and update only $\mathbf B$ by one gradient descent step $\mathbf B'=\mathbf B-\eta\,\frac{\partial \mathcal L}{\partial \mathbf B}$. Then the first-order update $\Delta \mathbf{Y}$ of $\mathbf{Y}$ and the corresponding loss decrease is gated by the projected energy $\Vert\mathbf A\mathbf X^\top\Vert_2^2$:
\begin{equation}\label{eq:gate_combined_appx}
\begin{aligned}
&\Delta\mathbf{Y} \;\triangleq\; \mathbf Y'-\mathbf{Y}
= -\eta\,\Vert\mathbf A\mathbf X^\top\Vert_2^2 \frac{\partial \mathcal L}{\partial \mathbf Y},\\
&\mathcal L(\mathbf Y')-\mathcal L(\mathbf Y)= -\eta\,\Vert\mathbf A\mathbf X^\top\Vert_2^2\Big\|\frac{\partial \mathcal L}{\partial \mathbf Y}\Big\|_2^2 + o(\eta).
\end{aligned}
\end{equation}
Moreover, if the rows of $\mathbf A$ are orthonormal (i.e., $\mathbf A\mathbf A^\top=\mathbf I_r$),
then $\Vert\mathbf{A}\mathbf{X}^\top\|_2^2$
is exactly the squared Euclidean norm of the projection of $\mathbf X$ onto the LoRA input subspace
$\mathrm{row}(\mathbf A)$.
\end{theorem}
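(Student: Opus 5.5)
The plan is a direct chain-rule computation, treating $\mathbf X\in\mathbb R^{1\times D}$ as a single row vector and $\mathbf Y\in\mathbb R^{1\times D'}$, so that $\mathbf Y=\mathbf X\mathbf W^{(t-1)\top}+\mathbf X\mathbf A^\top\mathbf B^\top$. The first step computes the gradient with respect to $\mathbf B$. Write $\mathbf z\triangleq\mathbf A\mathbf X^\top\in\mathbb R^{r}$ and $\mathbf g\triangleq(\partial\mathcal L/\partial\mathbf Y)^\top\in\mathbb R^{D'}$. Then $\mathbf Y^\top=\mathbf W^{t-1}\mathbf X^\top+\mathbf B\mathbf z$, and since $\mathbf Y$ is linear in $\mathbf B$ we get $\partial\mathcal L/\partial\mathbf B=\mathbf g\mathbf z^\top$, a rank-one $D'\times r$ matrix.

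The second step substitutes the update. With $\mathbf B'=\mathbf B-\eta\,\mathbf g\mathbf z^\top$ and $\mathbf A$ frozen, the output changes by
\[
\mathbf Y'^\top-\mathbf Y^\top=(\mathbf B'-\mathbf B)\mathbf z=-\eta\,\mathbf g\,(\mathbf z^\top\mathbf z)=-\eta\,\Vert\mathbf A\mathbf X^\top\Vert_2^2\,\mathbf g .
\]
Transposing gives the first identity. It holds exactly, not only to first order, because $\mathbf Y$ is affine in $\mathbf B$. For the loss, apply the first-order Taylor expansion of the differentiable $\mathcal L$ at $\mathbf Y$:
\[
\mathcal L(\mathbf Y')-\mathcal L(\mathbf Y)=\langle\mathbf g,\Delta\mathbf Y^\top\rangle+o(\Vert\Delta\mathbf Y\Vert)=-\eta\,\Vert\mathbf A\mathbf X^\top\Vert_2^2\,\Vert\mathbf g\Vert_2^2+o(\Vert\Delta\mathbf Y\Vert).
\]
Since $\Vert\Delta\mathbf Y\Vert=\eta\,\Vert\mathbf z\Vert^2\Vert\mathbf g\Vert$ is $O(\eta)$, the remainder is $o(\eta)$, matching the appendix form.

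The third step handles the orthonormal case $\mathbf A\mathbf A^\top=\mathbf I_r$. Here $\mathbf P=\mathbf A^\top\mathbf A$ is symmetric and idempotent, with range $\mathrm{row}(\mathbf A)$, so it is the orthogonal projector onto that subspace. Then $\Vert\mathbf P\mathbf X^\top\Vert_2^2=\mathbf X\mathbf A^\top\mathbf A\mathbf A^\top\mathbf A\mathbf X^\top=\mathbf X\mathbf A^\top\mathbf A\mathbf X^\top=\Vert\mathbf A\mathbf X^\top\Vert_2^2$, which is the claim.

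The main obstacle is only bookkeeping, not any genuine difficulty. It lies in handling the paper's row-vector convention and transposes consistently. A second point is that if $\mathbf X$ were a matrix of several tokens, $\Delta\mathbf Y$ would involve the Gram matrix $\mathbf X\mathbf A^\top\mathbf A\mathbf X^\top$ rather than a scalar. I would therefore state explicitly that the result is for a single input vector, as in Eq.~\eqref{eq:lora_forward}.
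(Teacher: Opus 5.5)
Your proposal is correct and takes essentially the same route as the paper's proof. You compute $\partial\mathcal L/\partial\mathbf B$ as the outer product of the output gradient with $\mathbf A\mathbf X^\top$, substitute the step to get $\Delta\mathbf Y=-\eta\Vert\mathbf A\mathbf X^\top\Vert_2^2\,\partial\mathcal L/\partial\mathbf Y$ for a single row-vector input, and finish with a first-order Taylor expansion. You also go slightly beyond the paper, whose proof leaves the orthonormal-rows projection claim unproved and keeps its remainder as $o(\Vert\Delta\mathbf Y\Vert)$ instead of the stated $o(\eta)$; your projector argument with $\mathbf A^\top\mathbf A$ and your observation that $\Vert\Delta\mathbf Y\Vert=\eta\Vert\mathbf A\mathbf X^\top\Vert_2^2\Vert\partial\mathcal L/\partial\mathbf Y\Vert_2$ fill both of these small gaps.
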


\begin{proof}
Denote $\mathbf g \triangleq \frac{\partial \mathcal L}{\partial \mathbf Y}\in\mathbb R^{1\times D'}$,
is the gradient of the loss $\mathcal{L}$ with respect to the LoRA output $\mathbf{Y}$, where $\mathbf Y=\mathbf X(\mathbf W^{t-1}+\mathbf B\mathbf A)^\top$. Given that the up-projection $\mathbf B$ is updated and the down-projection $\mathbf{A}$ is frozen, the differential $d\mathbf Y$ of $\mathbf Y$ only depends on $d\mathbf B$:
\begin{equation}\label{eq: 3.1-eq1}
d\mathbf Y=\mathbf X(d\mathbf B\,\mathbf A)^\top=\mathbf X\mathbf A^\top d\mathbf B^\top.
\end{equation}
Given the standard first-order differential identity for $\mathcal{L}$, where
\(
d\mathcal L
=\langle \mathbf g, d\mathbf Y\rangle
=\mathrm{tr}(\mathbf g\, d\mathbf Y^\top)
\) and $\langle\cdot,\cdot\rangle$ denotes the Frobenius inner product. By substituting $d\mathbf{Y}$ with Eq.\ref{eq: 3.1-eq1}, we obtain the relationship between $d\mathcal{L}$ and $d\mathbf{B}$: 
\begin{equation}\label{eq: 3.1-eq2}
d\mathcal L
=\mathrm{tr}\left(\mathbf g\, d\mathbf B\,\mathbf A\,\mathbf X^\top\right)
=\mathrm{tr}\left((\mathbf g^\top(\mathbf X\mathbf A^\top))^\top d\mathbf B\right).
\end{equation}
Meanwhile, the differential of $\mathcal L$ with respect to $\mathbf B$ satisfies
\(
d\mathcal L=\mathrm{tr}\!\left(\Big(\frac{\partial \mathcal L}{\partial \mathbf B}\Big)^\top d\mathbf B\right)
\), where $\frac{\partial \mathcal L}{\partial \mathbf B}\in\mathbb R^{D'\times r}$. By comparing this formulation with Eq.\ref{eq: 3.1-eq2}, we can derive the relationship between $\frac{\partial \mathcal L}{\partial \mathbf B}$ and $\mathbf{g}$, which is formulated as:
\begin{equation}\label{eq: 3.1-eq3}
\frac{\partial \mathcal L}{\partial \mathbf B}
=\mathbf g^\top(\mathbf X\mathbf A^\top)\in\mathbb R^{D'\times r}.
\end{equation}
It explicitly shows that $\frac{\partial \mathcal L}{\partial \mathbf B}$ is determined by the loss gradient w.r.t.\ $\mathbf Y$ and the component of $\mathbf X$ captured by the subspace induced by down-projection $\mathbf A$. Now apply one gradient descent step on $\mathbf B$, where
$\mathbf B'=\mathbf B-\eta\,\frac{\partial \mathcal L}{\partial \mathbf B}$. Define $\Delta\mathbf B \triangleq \mathbf B'-\mathbf B$ as the the parameter change of $\mathbf B$, the induced output change $\Delta\mathbf Y$ can be formulated as follows:
\begin{equation}
\begin{aligned}
\Delta\mathbf Y
& =\mathbf X(\Delta\mathbf B\,\mathbf A)^\top 
= -\eta\,\mathbf X\Big(\frac{\partial \mathcal L}{\partial \mathbf B}\mathbf A\Big)^\top
& =-\eta\,\mathbf X\Big(\mathbf g^\top(\mathbf X\mathbf A^\top)\mathbf A\Big)^\top 
=-\eta\,\mathbf X\mathbf A^\top\mathbf A\mathbf X^\top\,\mathbf g.
\end{aligned}
\end{equation}
Since $\mathbf X$ is a row input vector and $\mathbf X\mathbf A^\top\mathbf A\mathbf X^\top\in\mathbb R^{1\times 1}$ is a scalar, we have $\mathbf X\mathbf A^\top\mathbf A\mathbf X^\top=
\mathbf X\mathbf A^\top(\mathbf A\mathbf X^\top)=\|\mathbf A\mathbf X^\top\|_2^2$. Then the above expression of $\Delta \mathbf{Y}$ can be rewritten as follows:
\begin{equation}\label{eq:deltaY_final}
\Delta\mathbf Y = -\eta\,\|\mathbf A\mathbf X^\top\|_2^2\mathbf{g}
=-\eta\,\|\mathbf A\mathbf X^\top\|_2^2\,\frac{\partial \mathcal L}{\partial \mathbf Y}.
\end{equation}
Finally, a first-order Taylor expansion of the loss $\mathcal{L}$ around $\mathbf Y$ can be formulated as follows:
\begin{equation}
\begin{aligned}
\mathcal L(\mathbf Y')-\mathcal L(\mathbf Y)
& =\left\langle \frac{\partial \mathcal L}{\partial \mathbf Y},\,\Delta\mathbf Y\right\rangle + o(\|\Delta\mathbf Y\|) \\
& =-\eta\,\|\mathbf A\mathbf X^\top\|_2^2\left\|\frac{\partial \mathcal L}{\partial \mathbf Y}\right\|_2^2+o(\|\Delta\mathbf Y\|),
\end{aligned}
\end{equation}
where $o(\|\Delta\mathbf Y\|)$ collects higher-order terms in the output change. This completes the proof.
\end{proof}

\subsection{A Brief Derivation of the Optimal Solution $\mathbf{U}_G$ in Eq.~\ref{eq:general_subspace}}\label{appx:svd}

The optimization problem in Eq.~\ref{eq:general_subspace} is formulated as follows:
\begin{equation}
\mathbf{U}_G
= \arg\max_{\mathbf{U}\in\mathbb{R}^{D\times r}}
\mathrm{tr}\!\left(\mathbf{U}^\top \mathbf{S}^{1:t-1} \mathbf{U} + \mathbf{U}^\top \mathbf{S}^{t} \mathbf{U}\right),
\quad \text{s.t.}\quad \mathbf{U}^\top \mathbf{U} = \mathbf{I}_r.
\end{equation}
We first show that each $\mathbf{S}^i$ is symmetric positive semi-definite. By definition,
$\mathbf{S}^i=\mathbf{X}^{i\top}\mathbf{X}^i\in\mathbb{R}^{D\times D}$.
For any $D$-dimension vector $\mathbf{z}\in\mathbb{R}^D$, we have the following rule:
\begin{equation}
\mathbf{z}^\top \mathbf{S}^i \mathbf{z}
= \mathbf{z}^\top \mathbf{X}^{i\top} \mathbf{X}^i \mathbf{z}
= \|\mathbf{X}^i\mathbf{z}\|_2^2 \ge 0,
\end{equation}
which implies $\mathbf{S}^i\succeq \mathbf{0}$.
Moreover, $\mathbf{S}^i$ is symmetric since
$\mathbf{S}^{i\top}=\mathbf{X}^{i\top}\mathbf{X}^i=\mathbf{S}^i$.
Therefore, $\mathbf{S}^{1:t-1}+\mathbf{S}^t$ is also symmetric positive semi-definite. Let $\left[\mathbf{U}_S,\mathbf{\Sigma}_S,\mathbf{V}_S\right]=SVD(\mathbf{S}^{1:t-1}+\mathbf{S}^t)$,
where the singular values in $\mathbf{\Sigma}_S$ are sorted in non-increasing order.
Since $\mathbf{S}^{1:t-1}+\mathbf{S}^t$ is symmetric positive semi-definite, its SVD coincides with its eigenvalue decomposition, where $\mathbf{V}_S=\mathbf{U}_S$. Then we have:
\begin{equation}\label{eq:svd_1}
\mathbf{S}^{1:t-1}+\mathbf{S}^t
= \mathbf{U}_S\mathbf{\Sigma}_S\mathbf{V}_S^\top = \mathbf{U}_S\mathbf{\Sigma}_S\mathbf{U}_S^\top,
\quad
\mathbf{\Sigma}_S=\mathrm{diag}(\sigma_1,\ldots,\sigma_D),
\quad
\sigma_1\ge\cdots\ge\sigma_D\ge 0.
\end{equation}
Substituting Eq.~\ref{eq:svd_1} into the objective yields
\begin{equation}\label{eq:svd_2}
\mathrm{tr}\!\left(\mathbf{U}^\top(\mathbf{S}^{1:t-1}+\mathbf{S}^t)\mathbf{U}\right)
=\mathrm{tr}\!\left(\mathbf{U}^\top \mathbf{U}_S \mathbf{\Sigma}_S \mathbf{U}_S^\top \mathbf{U}\right)
=\mathrm{tr}\!\left((\mathbf{U}_S^\top \mathbf{U})^\top \mathbf{\Sigma}_S (\mathbf{U}_S^\top \mathbf{U})\right).
\end{equation}
Define $\mathbf{U}'\triangleq \mathbf{U}_S^\top \mathbf{U}\in\mathbb{R}^{D\times r}$.
Since $\mathbf{U}^\top\mathbf{U}=\mathbf{I}_r$ and $\mathbf{U}_S\mathbf{U}_S^\top=\mathbf{I}_D$, we have
$\mathbf{U}'^\top\mathbf{U}'
= \mathbf{U}^\top \mathbf{U}_S \mathbf{U}_S^\top \mathbf{U}
= \mathbf{U}^\top\mathbf{U}
= \mathbf{I}_r$, $i.e.$, the columns of $\mathbf{U}'$ are orthonormal. Therefore, Eq.\ref{eq:svd_2} can be rewritten as follows:
\begin{equation}
\mathrm{tr}\!\left(\mathbf{U}^\top(\mathbf{S}^{1:t-1}+\mathbf{S}^t)\mathbf{U}\right)=
\mathrm{tr}\!\left(\mathbf{U}'^\top \mathbf{\Sigma}_S \mathbf{U}'\right)
= \sum_{j=1}^D \sigma_j \|\mathbf{u}'_j\|_2^2,
\end{equation}
where $\mathbf{u}'_j\in\mathbb{R}^r$ denotes the $j$-th row of $\mathbf{U}'$.

For each $j$, we have
$\|\mathbf{u}'_j\|_2^2
= \|\mathbf{e}_j^\top \mathbf{U}'\|_2^2
\le \|\mathbf{e}_j\|_2^2\,\|\mathbf{U}'\|_2^2
= \|\mathbf{U}'\|_2^2$,
where $\mathbf{e}_j\in\mathbb{R}^D$ denotes the $j$-th standard basis vector. 
Meanwhile, since $\mathbf{U}'^\top\mathbf{U}'=\mathbf{I}_r$,
we have $\|\mathbf{U}'\|_2^2
= \lambda_{\max}\!\left(\mathbf{U}'^\top\mathbf{U}'\right)
= \lambda_{\max}(\mathbf{I}_r)
= 1$, where $\lambda_{\max}(\cdot)$ denotes the operation of taking the largest eigenvalue. It implies $0\le \|\mathbf{u}'_j\|_2^2 \le 1$ for all $j$.

Moreover, we have
$
\sum_{j=1}^D \|\mathbf{u}'_j\|_2^2
= \|\mathbf{U}'\|_F^2
= \mathrm{tr}(\mathbf{U}'^\top\mathbf{U}')
= \mathrm{tr}(\mathbf{I}_r)
= r$.

Given $0\le \|\mathbf{u}'_j\|_2^2 \le 1$ and $
\sum_{j=1}^D \|\mathbf{u}'_j\|_2^2=r$,
the objective $\sum_{j=1}^D \sigma_j \|\mathbf{u}'_j\|_2^2$ is maximized by assigning $\|\mathbf{u}'_j\|_2^2 = 1$ to the largest $r$ singular values:
\begin{equation}
\|\mathbf{u}'_j\|_2^2 =
\begin{cases}
1, & j=1,\ldots,r,\\
0, & j=r+1,\ldots,D.
\end{cases}
\end{equation}
The corresponding function value is $\sum_{j=1}^D \sigma_j \|\mathbf{u}'_j\|_2^2=\sum_{j=1}^r\sigma_j$.
One feasible solution achieving the maximum is $\mathbf{U}'=\begin{bmatrix}\mathbf{I}_r\\ \mathbf{0}\end{bmatrix}$ and 
$\mathbf{U}_G=\mathbf{U}_{S[:,1:r]}$, $i.e.$, $\mathbf{U}_G$ consists of the top-$r$ singular vectors of matrix $\mathbf{S}^{1:t-1}+\mathbf{S}^t$.

\subsection{Proof of Theorem~\ref{lemma_main:merge_closed_form}}\label{appx:proof_merge_closed_form}

\begin{theorem}[Theorem~\ref{lemma_main:merge_closed_form}, Closed-form solution for Eq.\ref{eq:merge_obj}]
\label{lemma_appx:merge_closed_form} 
Let $\mathbf{S}^t = \mathbf{X}^{t\top} \mathbf{X}^t \in \mathbb{R}^{D \times D}$ and $\mathbf{S}^{1:t-1} = \sum_{i=1}^{t-1} \mathbf{X}^{i\top} \mathbf{X}^i \in \mathbb{R}^{D \times D}$, the optimal solution of Eq.\ref{eq:merge_obj} is given by:
\begin{equation}
\mathbf{B}_G^{(j)\star} = \gamma^{(j)\star}\mathbf{B}_G^{(j)} = \frac{\lambda \mathbf{A}_G^{(j)} \mathbf{S}^t \mathbf{A}_G^{(j)\top}}{\mathbf{A}_G^{(j)} (\lambda\mathbf{S}^t+ \mathbf{S}^{1:t-1}) \mathbf{A}_G^{(j)\top}} \mathbf{B}_G^{(j)}.
\end{equation}
\end{theorem}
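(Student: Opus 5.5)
The plan is to treat Eq.~\ref{eq:merge_obj} as an unconstrained convex quadratic least-squares problem in the single vector $\mathbf{b}\triangleq\mathbf{B}_G^{(j)\star}\in\mathbb{R}^{D'}$. We then solve its first-order optimality condition explicitly. Write $\mathbf{a}\triangleq\mathbf{A}_G^{(j)}\in\mathbb{R}^{1\times D}$ and $\mathbf{b}_0\triangleq\mathbf{B}_G^{(j)}$. For any data matrix $\mathbf{X}^i$ we have $\mathbf{X}^i(\mathbf{b}\mathbf{a})^\top=(\mathbf{X}^i\mathbf{a}^\top)\mathbf{b}^\top$, a rank-one outer product of the column vector $\mathbf{X}^i\mathbf{a}^\top$ with $\mathbf{b}^\top$. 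The key identity is
\[
\|\mathbf{v}\mathbf{w}^\top\|_F^2=\|\mathbf{v}\|_2^2\|\mathbf{w}\|_2^2 .
\]
It turns each term into a scalar energy times a squared vector norm. Specifically, $\|\mathbf{X}^i\mathbf{a}^\top\|_2^2=\mathbf{a}\mathbf{S}^i\mathbf{a}^\top$, which is exactly the projection energy from Theorem~\ref{lemma_main:proj_energy_gate}.

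Applying this identity, the objective becomes
\[
F(\mathbf{b})=\lambda\,(\mathbf{a}\mathbf{S}^t\mathbf{a}^\top)\,\|\mathbf{b}-\mathbf{b}_0\|_2^2+(\mathbf{a}\mathbf{S}^{1:t-1}\mathbf{a}^\top)\,\|\mathbf{b}\|_2^2 .
\]
The first term uses $\mathbf{X}^t(\mathbf{b}\mathbf{a})^\top-\mathbf{X}^t(\mathbf{b}_0\mathbf{a})^\top=(\mathbf{X}^t\mathbf{a}^\top)(\mathbf{b}-\mathbf{b}_0)^\top$. The second term sums the old-task energies into $\mathbf{S}^{1:t-1}$. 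Denote the two scalar coefficients by $\alpha=\lambda\,\mathbf{a}\mathbf{S}^t\mathbf{a}^\top\ge 0$ and $\beta=\mathbf{a}\mathbf{S}^{1:t-1}\mathbf{a}^\top\ge 0$. Both are nonnegative because each $\mathbf{S}^i\succeq\mathbf{0}$, as shown in Appx.~\ref{appx:svd}.

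Then $F$ is a separable isotropic quadratic with Hessian $2(\alpha+\beta)\mathbf{I}$. Setting $\nabla F=2\alpha(\mathbf{b}-\mathbf{b}_0)+2\beta\mathbf{b}=\mathbf{0}$ gives
\[
\mathbf{b}^\star=\frac{\alpha}{\alpha+\beta}\,\mathbf{b}_0 ,
\]
which is precisely Eq.~\ref{eq: closed_form_matrix} with $\gamma^{(j)\star}=\alpha/(\alpha+\beta)$. The bound $\gamma^{(j)\star}\in[0,1]$ follows immediately from $\alpha,\beta\ge 0$.

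The only delicate point is justifying that this stationary point is the unique global minimizer, which requires $\alpha+\beta>0$. I would invoke the full-rank assumption $\mathbf{S}^{1:t-1}\succ\mathbf{0}$ already adopted for computing $\mathbf{U}_I$; together with $\mathbf{a}\neq\mathbf{0}$ (indeed $\|\mathbf{a}\|_2=1$ since $\mathbf{A}_G=\mathbf{U}_G^\top$ has orthonormal rows), it gives $\beta>0$. This makes $F$ strictly convex, so the solution is unique and exact, with no linearization involved. If $\alpha+\beta=0$, $F$ is identically zero and any $\mathbf{b}$ is optimal, so that degenerate case can be noted and excluded.
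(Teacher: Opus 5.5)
Your proof is correct and takes essentially the same approach as the paper. The paper expands both terms as traces, observes that the objective is a convex quadratic in $\mathbf{B}_G^{(j)\star}$, and solves $\nabla\mathcal{J}=\mathbf{0}$; your identity $\|\mathbf{v}\mathbf{w}^\top\|_F^2=\|\mathbf{v}\|_2^2\|\mathbf{w}\|_2^2$ is the same computation, since $\mathbf{A}_G^{(j)}\mathbf{S}\mathbf{A}_G^{(j)\top}$ is a scalar and each trace collapses to a weighted squared norm, while your explicit uniqueness argument and the bound $\gamma^{(j)\star}\in[0,1]$ are left implicit in the paper (one small caveat: at $t=1$ one has $\mathbf{S}^{1:0}=\mathbf{0}$, so strict convexity there comes from $\alpha=\lambda\,\mathbf{A}_G^{(j)}\mathbf{S}^{1}\mathbf{A}_G^{(j)\top}>0$ rather than from $\beta>0$).
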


\begin{proof}
For the $j$-th rank-1 unit in $\text{LoRA}_G$, we have $\mathbf{A}_G^{(j)}\in\mathbb{R}^{1\times D}$ and
$\mathbf{B}_G^{(j)},\mathbf{B}_G^{(j)\star}\in\mathbb{R}^{D'\times 1}$, and
\(
(\mathbf{B}_G^{(j)\star}\mathbf{A}_G^{(j)})^\top
=\mathbf{A}_G^{(j)\top}\mathbf{B}_G^{(j)\star\top}.
\) The objective function (denoted as $\mathcal{J}(\mathbf{B}_G^{(j)\star}$) in Eq.\ref{eq:merge_obj} is defined as:
\begin{equation}
\mathcal{J}\!\left(\mathbf{B}_G^{(j)\star}\right)
=
\lambda \left\Vert \mathbf{X}^t (\mathbf{B}_G^{(j)\star} \mathbf{A}_G^{(j)})^\top
-\mathbf{X}^t (\mathbf{B}_G^{(j)} \mathbf{A}_G^{(j)})^\top
\right\Vert_F^2
+\sum_{i=1}^{t-1}\left\Vert
\mathbf{X}^i (\mathbf{B}_G^{(j)\star}\mathbf{A}_G^{(j)})^\top
\right\Vert_F^2.
\end{equation}
The first term of the objective in Eq.\ref{eq:merge_obj} can be transformed into follows:
\begin{equation}\label{eq:appx_merge_term1_trace}
\begin{aligned}
&\left\Vert \mathbf{X}^t (\mathbf{B}_G^{(j)\star} \mathbf{A}_G^{(j)})^\top
-\mathbf{X}^t (\mathbf{B}_G^{(j)} \mathbf{A}_G^{(j)})^\top
\right\Vert_F^2 \\
=&
\left\Vert \mathbf{X}^t \mathbf{A}_G^{(j)\top}\!\left(\mathbf{B}_G^{(j)\star}-\mathbf{B}_G^{(j)}\right)^\top
\right\Vert_F^2 \\
=&
\mathrm{tr}\!\left(
\left(\mathbf{B}_G^{(j)\star}-\mathbf{B}_G^{(j)}\right)\mathbf{A}_G^{(j)}
\mathbf{X}^{t\top}\mathbf{X}^t
\mathbf{A}_G^{(j)\top}
\left(\mathbf{B}_G^{(j)\star}-\mathbf{B}_G^{(j)}\right)^\top
\right) \\
=&
\mathrm{tr}\!\left(
\left(\mathbf{B}_G^{(j)\star}-\mathbf{B}_G^{(j)}\right)\mathbf{A}_G^{(j)}
\mathbf{S}^{t}
\mathbf{A}_G^{(j)\top}
\left(\mathbf{B}_G^{(j)\star}-\mathbf{B}_G^{(j)}\right)^\top
\right).
\end{aligned}
\end{equation}
Similarly, for the second term in Eq.\ref{eq:merge_obj}, we have:
\begin{equation}\label{eq:appx_merge_term2_trace}
\begin{aligned}
&\sum_{i=1}^{t-1}\left\Vert
\mathbf{X}^i (\mathbf{B}_G^{(j)\star}\mathbf{A}_G^{(j)})^\top - \mathbf{X}^i \mathbf{0}^{D \times D'}
\right\Vert_F^2 \\
=&
\sum_{i=1}^{t-1}\mathrm{tr}\!\left(
\mathbf{B}_G^{(j)\star}\mathbf{A}_G^{(j)}
\mathbf{X}^{i\top}\mathbf{X}^{i}
\mathbf{A}_G^{(j)\top}\mathbf{B}_G^{(j)\star\top}
\right) \\
= & \mathrm{tr}\!\left(
\mathbf{B}_G^{(j)\star}\mathbf{A}_G^{(j)}
\mathbf{S}^{1:t-1}
\mathbf{A}_G^{(j)\top}\mathbf{B}_G^{(j)\star\top}
\right).
\end{aligned}
\end{equation}
By substituting Eq.\ref{eq:appx_merge_term1_trace} and Eq.\ref{eq:appx_merge_term2_trace} into the origin objective and expanding the quadratic term in Eq.\ref{eq:appx_merge_term1_trace}, we transform the objective function as follows:
\begin{equation}\label{eq:appx_merge_expanded}
\begin{aligned}
\mathcal{J}\!\left(\mathbf{B}_G^{(j)\star}\right)
&=
\lambda\,\mathrm{tr}\!\left(
\mathbf{B}_G^{(j)\star}\mathbf{A}_G^{(j)}\mathbf{S}^t\mathbf{A}_G^{(j)\top}\mathbf{B}_G^{(j)\star\top}
\right)
-2\lambda\,\mathrm{tr}\!\left(
\mathbf{B}_G^{(j)\star}\mathbf{A}_G^{(j)}\mathbf{S}^t\mathbf{A}_G^{(j)\top}\mathbf{B}_G^{(j)\top}
\right) \nonumber \\
&\quad
+\lambda\,\mathrm{tr}\!\left(
\mathbf{B}_G^{(j)}\mathbf{A}_G^{(j)}\mathbf{S}^t\mathbf{A}_G^{(j)\top}\mathbf{B}_G^{(j)\top}
\right)
+\mathrm{tr}\!\left(
\mathbf{B}_G^{(j)\star}\mathbf{A}_G^{(j)}\mathbf{S}^{1:t-1}\mathbf{A}_G^{(j)\top}\mathbf{B}_G^{(j)\star\top}
\right).
\end{aligned}    
\end{equation}
Since $\mathcal{J}(\mathbf{B}_G^{(j)\star})$ is a convex quadratic function of $\mathbf{B}_G^{(j)\star}$, hence the global optimum is attained at the stationary point
$\nabla_{\mathbf{B}_G^{(j)\star}}\mathcal{J}=\mathbf{0}$.
We then derive $\nabla_{\mathbf{B}_G^{(j)\star}}\mathcal{J}$ as follows:
\begin{equation}
\nabla_{\mathbf{B}_G^{(j)\star}}\mathcal{J}
=
2\mathbf{B}_G^{(j)\star}\,\mathbf{A}_G^{(j)}(\lambda\mathbf{S}^t+\mathbf{S}^{1:t-1})\mathbf{A}_G^{(j)\top}
-2\lambda\,\mathbf{B}_G^{(j)}\,\mathbf{A}_G^{(j)}\mathbf{S}^t\mathbf{A}_G^{(j)\top}.
\end{equation}
Setting it to $\mathbf{0}$ and directly solving for $\mathbf{B}_G^{(j)\star}$ yields:
\begin{equation}
\mathbf{B}_G^{(j)\star}
=
\frac{\lambda \mathbf{A}_G^{(j)} \mathbf{S}^t \mathbf{A}_G^{(j)\top}}
{\mathbf{A}_G^{(j)} (\lambda\mathbf{S}^t+ \mathbf{S}^{1:t-1}) \mathbf{A}_G^{(j)\top}}
\mathbf{B}_G^{(j)}
\;\triangleq\;
\gamma^{(j)\star}\mathbf{B}_G^{(j)}.
\end{equation}
\end{proof}
\noindent\textbf{Discussion.}
The above closed-form solution reveals that the optimal merged update for each rank-1 unit is a re-scaling of the current unit, $i.e.$, $\mathbf{B}_G^{(j)\star}=\gamma^{(j)\star}\mathbf{B}_G^{(j)}$.
The coefficient $\gamma^{(j)\star}$ is determined by the energy of the current-task covariance $\mathbf{S}^t$ and the accumulated past covariance $\mathbf{S}^{1:t-1}$ along the same input-side direction $\mathbf{A}_G^{(j)}$.
Intuitively, this yields a principled trade-off: when $\mathbf{A}_G^{(j)}$ aligns more with the current task, the merge preserves more of the current update. When it also induces large responses on past data, the merge shrinks the update to mitigate interference.

\subsection{Theoretical Analysis of Gradient-Aligned Optimization (GAO)}
\label{appx: theory_of_sgo}

\paragraph{Equivalence between GAO and the Explicit Gradient Alignment Objective.}
Denote $\mathbf g_1 \triangleq \nabla_\theta \mathcal L(\theta,\mathcal B_1)$ and
$\mathbf g_2 \triangleq \nabla_\theta \mathcal L(\theta,\mathcal B_2)$.
Consider the following objective for a given $\rho>0$:
\begin{equation}
\label{eq:sgo_surrogate_obj}
{L}_\rho(\theta)
\triangleq
\mathcal L\!\left(\theta-\rho\frac{\mathbf g_2}{\|\mathbf g_2\|_2^2},\,\mathcal B_1\right)
+
\mathcal L\!\left(\theta-\rho\frac{\mathbf g_1}{\|\mathbf g_1\|_2^2},\,\mathcal B_2\right).
\end{equation}
If the inner perturbations $\rho\frac{\mathbf g_2}{\|\mathbf g_2\|_2^2}$ and $\rho\frac{\mathbf g_1}{\|\mathbf g_1\|_2^2}$ are treated as constants when computing the outer gradients, then one gradient step on $\mathcal{L}_\rho$ decomposes exactly into the two GAO updates:
\begin{equation}
\label{eq:sgo_exact_gd}
\theta \leftarrow \theta - \mathrm{lr}\times \nabla_\theta
\mathcal L\!\left(\theta-\rho\frac{\mathbf g_2}{\|\mathbf g_2\|_2^2},\,\mathcal B_1\right),
\qquad
\theta \leftarrow \theta - \mathrm{lr}\times \nabla_\theta
\mathcal L\!\left(\theta-\rho\frac{\mathbf g_1}{\|\mathbf g_1\|_2^2},\,\mathcal B_2\right).
\end{equation}
Assuming $\mathcal L(\cdot,\mathcal B_1)$ and $\mathcal L(\cdot,\mathcal B_2)$ are twice differentiable, a first-order Taylor expansion around $\theta$ gives
\begin{equation}
\label{eq:taylor_sgo_1}
\mathcal L\!\left(\theta-\rho\frac{\mathbf g_2}{\|\mathbf g_2\|_2^2},\,\mathcal B_1\right)
=
\mathcal L(\theta,\mathcal B_1)
-\rho\,\mathbf g_1^\top\frac{\mathbf g_2}{\|\mathbf g_2\|_2^2}
+\mathcal O(\rho^2),
\end{equation}
\begin{equation}
\label{eq:taylor_sgo_2}
\mathcal L\!\left(\theta-\rho\frac{\mathbf g_1}{\|\mathbf g_1\|_2^2},\,\mathcal B_2\right)
=
\mathcal L(\theta,\mathcal B_2)
-\rho\,\mathbf g_2^\top\frac{\mathbf g_1}{\|\mathbf g_1\|_2^2}
+\mathcal O(\rho^2).
\end{equation}
Summing~\eqref{eq:taylor_sgo_1}--\eqref{eq:taylor_sgo_2} yields the first-order equivalence of our objective:
\begin{equation}
\label{eq:sgo_align_equiv}
{\mathcal L}_\rho(\theta)
=
\mathcal L(\theta,\mathcal B)
-\rho\,\mathbf g_1^\top\mathbf g_2\!\left(\frac{1}{\|\mathbf g_1\|_2^2}+\frac{1}{\|\mathbf g_2\|_2^2}\right)
+\mathcal O(\rho^2).
\end{equation}
Equation~\eqref{eq:sgo_align_equiv} shows that GAO implements a stochastic first-order optimization of gradient alignment: minimizing ${\mathcal L}_\rho$ directly promotes larger cross-group inner products $\mathbf g_i^\top \mathbf g_j$.

\paragraph{Theoretical Connection between GAO and the Anti-Interference Ability.} We then analyze the connection between encouraging large gradient similarities in Eq.\ref{eq:sgo_align_equiv} and tightening the anti-interference ability of the learned parameters.
Let $\mathcal{D}$ be a mini-batch from the current task and let $\{\mathcal{D}_i\}_{i=1}^{N}$ be a partition of $\mathcal{D}$ into $N$ label-disjoint groups, where $\mathcal{D} = \mathcal{D}_1 \cup \cdots \cup \mathcal{D}_N$.
Denote gradients of the $i$-th group $\mathcal{D}_i$ as
$
\mathbf{g}_i \triangleq \nabla_{\theta}\mathcal{L}(\theta,\mathcal{D}_i)
$,
and define the gradient cone as:
\begin{equation}
\mathcal{C}\triangleq \Big\{\sum_{i=1}^{N}\alpha_i \mathbf{g}_i:\ \alpha_i\ge 0\Big\}.  
\end{equation}
Considering the gradient diversity provided by sufficient within-task samples and the high similarity between incremental classification tasks, we assume that a possible feature gradient perturbation can be decomposed into a conic combination of the current-task group gradients and a bounded residual:
\begin{equation}
\label{eq:cone_noise}
\mathbf{g}_{\mathrm{fut}}=\mathbf{c}+\boldsymbol{\xi} = \sum_{i=1}^N\beta_i\mathbf{g}_i + \boldsymbol{\xi}
,\qquad \mathbf{c}\in\mathcal{C},\qquad \|\boldsymbol{\xi}\|_2\le \varepsilon.
\end{equation}
When learning a future task, consider one update $\theta^{+}=\theta-\eta\,\mathbf{g}_{\mathrm{fut}}$ with $\eta>0$. Assume $\mathcal{L}(\mathcal{D},\theta)$ satisfies local linearity, then the loss change can be formulated as follows:
\begin{equation}
\mathcal{L}(\mathcal{D}, \theta^{+})-\mathcal{L}(\mathcal{D}, \theta)
\le -\eta \nabla_\theta \mathcal{L}(\mathcal{D}, \theta)^\top \mathbf{g}_{\mathrm{fut}}
+ \mathcal{O}(\eta^2\Vert \mathbf{g}_{\mathrm{fut}} \Vert_2^2).
\end{equation}
Since the gradient $\nabla_\theta \mathcal{L}(\mathcal{D}, \theta)$ computed by $\mathcal{D}$ lies in the gradient cone $\mathcal{C}$, there exist coefficients $\{\gamma_i\}_{i=1}^N$ with $\gamma_i\ge 0$ such that
$\nabla_\theta \mathcal{L}(\mathcal{D}, \theta) = \sum_{i=1}^N \gamma_i \mathbf{g}_i$. The loss change is bounded by the following formulation:
\begin{equation}
\label{eq:loss_change_bound_expanded}
\begin{aligned}
\mathcal{L}(\mathcal{D}, \theta^{+})-\mathcal{L}(\mathcal{D}, \theta)
\leq-\eta \sum_{i=1}^N\sum_{j=1}^N \gamma_i\beta_j \,\mathbf{g}_i^\top \mathbf{g}_j
\;-\;\eta \sum_{i=1}^N \gamma_i\,\mathbf{g}_i^\top \boldsymbol{\xi} + \mathcal{O}(\eta^2\Vert \mathbf{g}_{\mathrm{fut}} \Vert_2^2).
\end{aligned}
\end{equation}

\paragraph{Discussion.}
Theorem~\ref{appx: theory_of_sgo} reveals how within-task gradient alignment improves the anti-interference ability. The leading term in~\eqref{eq:loss_change_bound_expanded} is a weighted sum of pairwise inner products $\mathbf g_i^\top \mathbf g_j$ between label-disjoint groups. When these cross-group correlations $\mathbf g_i^\top \mathbf g_j$ are consistently positive and sufficiently large, any future update whose gradient direction admits the cone decomposition~\eqref{eq:cone_noise} tends to induce a smaller increase or even a decrease on the current-task loss. Therefore, encouraging shared directions across groups effectively enlarges the safe set of update directions in which the current task is less likely to be harmed, improving robustness to future optimization steps.

\begin{figure*}[h]
\centering
\includegraphics[width=0.80\textwidth]{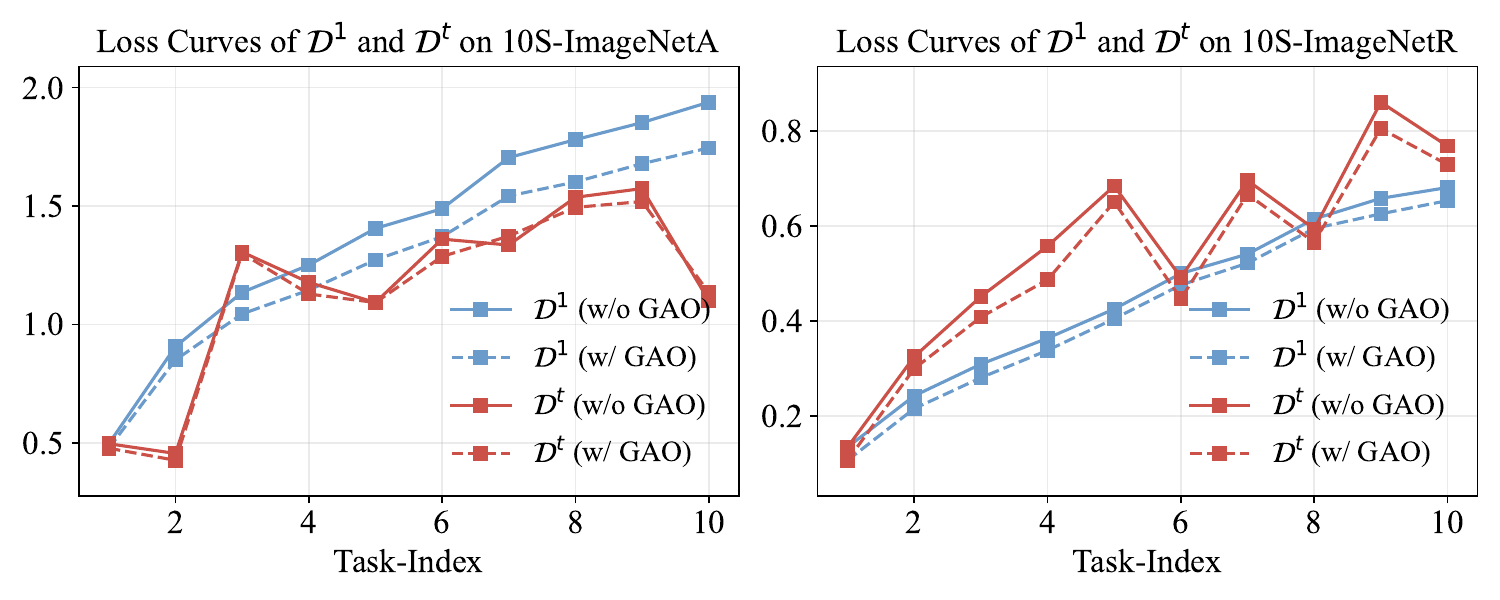}
\caption{Training loss curves of the first task $\mathcal{D}^1$ and the new task $\mathcal{D}^t$ across incremental sessions on 10S-ImageNetR and 10S-ImageNetA.}\label{fig: loss_curve_sgo}
\end{figure*}

\paragraph{Experimental Validation.} To demonstrate the effectiveness of our GAO algorithm, we visualize the training loss curves of the first task $\mathcal{D}^1$ and the new task $\mathcal{D}^t$ across incremental sessions, as shown in Fig.\ref{fig: loss_curve_sgo}. As the task increases, the training loss on the first task $\mathcal{D}^1$ steadily increases due to interference from subsequent updates. GAO consistently mitigates this drift on both datasets, supporting stronger robustness against interference from updates of future tasks. Meanwhile, GAO also yields a lower loss on the new task $\mathcal{D}^t$, indicating more effective within-task optimization. These trends agree with Theorem~\ref{appx: theory_of_sgo}: by enlarging the safe set of update directions under the cone model in~\eqref{eq:cone_noise}, GAO improves robustness of learned directions and alleviates the forgetting of previous knowledge.

\section{More Detailed Experimental Results}\label{sec: exp_appx}

\subsection{Hyper-Parameter Analysis}

\begin{figure*}[!t]
\centering
\includegraphics[width=1.00\textwidth]{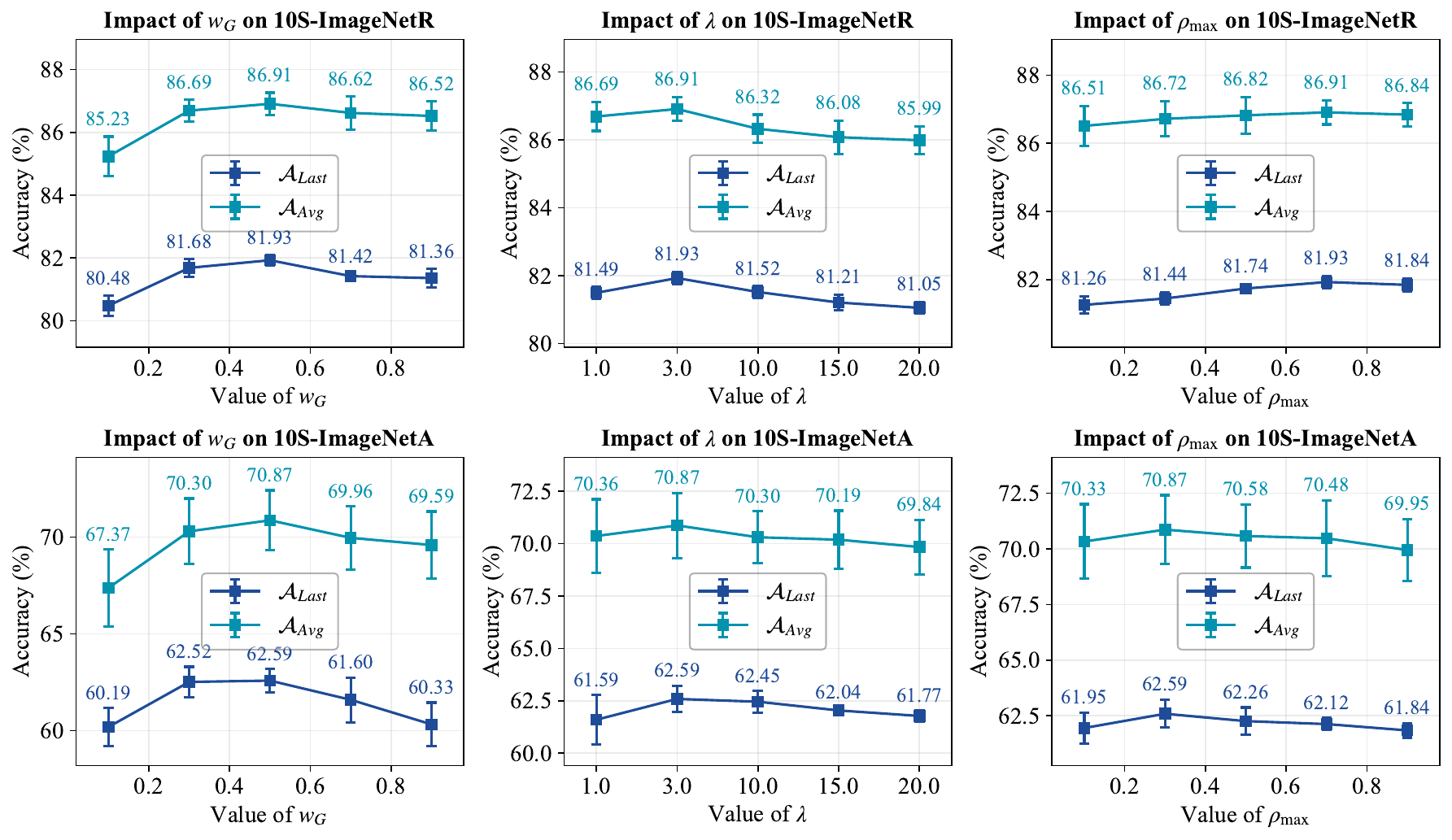}
\caption{Impact of the hyper-parameters $w_G$, which balances the contributions of the two LoRA branches in Eq.~\ref{eq:two_branches}, $\lambda$, which trades off performance between past and new tasks in Eq.~\ref{eq:merge_obj}, and $\rho_{\max}$, which controls the inner gradient magnitude in GAO in Eq.~\ref{eq:meta-learning}, on 10S-ImageNetR and 10S-ImageNetA. All results are averaged over three random seeds, and we report the mean with error bars.}\label{fig: parameter_analysis}
\end{figure*}

We conduct experiments to investigate how several key hyper-parameters affect performance on 10S-ImageNetR and 10S-ImageNetA, $i.e.$, $w_G$ in Eq.\ref{eq:lora_forward}, $\lambda$ in Eq.\ref{eq:merge_obj}, and $\rho_{\max}$ for GAO in Eq.\ref{eq:meta-learning}. The experimental results are shown in Fig.\ref{fig: parameter_analysis}.

\paragraph{Impact of $w_G$.} The hyper-parameter $w_G$ controls the relative contribution of the general branch $\text{LoRA}_G$ and the isolated branch $\text{LoRA}_I$ in forward propagation, thus directly adjusting the balance between cross-task knowledge sharing and task-specific updates. As shown in Fig.\ref{fig: parameter_analysis}, a too-small $w_G$	($e.g.$, 0.1) is sub-optimal since the model is dominated by $\text{LoRA}_I$ and cannot sufficiently exploit task-shared directions, highlighting the necessity of learning general knowledge. When $w_G \geq 0.7$, the performance decreases notably because the update becomes overly dominated by the general branch, which fails to learn enough novel directions during continual adaptation while inducing inevitable feature drifts of old tasks. This trend is consistent with Fig.\ref{fig: energy_bar}, where $\mathbf{A}_G$ already captures most of the input component, thus it is necessary to down-weight $\text{LoRA}_G$ with a proper coefficient $0.0 \leq w_G \leq 1.0$. According to the results, we set $w_G=0.5$.

\paragraph{Impact of $\lambda$.} The hyper-parameter $\lambda$ in controls the trade-off in rescaling the general branch between preserving the current-task performances and limiting feature drift on previous tasks. As shown in Fig.~\ref{fig: parameter_analysis}, although $\lambda=1.0$ assigns equal weight to the new task and the accumulated past tasks, it overly shrinks the general update and insufficiently adapts each new task, which accumulates larger approximation errors across sessions and results in sub-optimal performances. In contrast, a relatively large $\lambda$ ($e.g.$, $\lambda=10.0$) does not cause a notable degradation, suggesting that updates along the general subspace are inherently transferable, so aggressive adaptation on later tasks does not substantially harm performances on earlier tasks. Overall, the results are insensitive within $3.0 \leq \lambda \leq 15.0$. Based on the results, we set $\lambda=3.0$ in our method.

\paragraph{Impact of $\rho_{\max}$.} The hyper-parameter $\rho_{\max}$ bounds the randomized inner-step scale $\rho\sim\mathrm{U}(0,\rho_{\max})$ in Eq.~\ref{eq:meta-learning}, controlling the perturbation magnitude used to couple gradients in GAO. When $\rho_{\max}$ is small, the perturbation is negligible, so GAO reduces to standard training on the two label-disjoint subsets, minimizing the sum of their cross-entropy losses. In contrast, an overly large $\rho_{\max}$ can push the perturbed parameter outside the local linear region where the implicit alignment approximation is reliable, introducing higher-order effects and harming optimization stability. Based on the results, we set $\rho_{\max}=0.7$ for ImageNetR and $\rho_{\max}=0.3$ for the other datasets.

\subsection{Effect of Different LoRA Ranks on LoDA Performance}

\begin{table*}[h]
\centering
\caption{\footnotesize{Analysis of the effect of rank on additional parameters, storage and computational overhead, and performance on 20S-ImageNetR. We compare our LoDA with different ranks against representative CL methods.}}\label{tab: memory_overhead_appx}
\begin{adjustbox}{max width=1.00\textwidth}
\begin{tabular}{
lccccccccc
}\toprule
\rowcolor{white}
\multirow{2}*{Methods} & \multirow{2}*{Venue} & \multicolumn{3}{c}{Training Stage (Per Sample)} & 
\multicolumn{2}{c}{Inference Stage (Per Sample)}
& \multicolumn{2}{c}{20s-ImageNetR} \\
\cmidrule{3-9}
\rowcolor{white}
& & \makecell{Extra\\Memory} & \makecell{Trainable\\Params} & \makecell{Training \\ GFLOPs} & \makecell{Extra\\Params} & \makecell{Inference \\ GFLOPs} & 
$\mathcal{A}_{Last}$ $\uparrow$ & $\mathcal{A}_{Avg}$ \\
\midrule
\multicolumn{5}{l}{\textbf{Methods based on the ViT-21K model (86M).}} \\
\midrule
CODA \cite{codaCVPR23} & CVPR-23 &  15.4MB & 3.84M (4.46\%) & 35.17 & 3.84M (4.46\%) & 35.17 & $69.96_{\pm 0.50}$ & $75.34_{\pm 0.85}$ \\
SLCA~\cite{slcaICCV23} & ICCV-23 & 450.6MB & 86M (100\%) & 17.60 & 0.00M (0.00\%) & 17.60 & $74.63_{\pm 1.55}$& $79.92_{\pm 1.29} $ \\
SSIAT~\cite{ssaitCVPR24} & CVPR-24 & 450.6MB & 1.19M (1.38\%) & 17.81 & 1.19M (1.38\%) & 17.81 & $75.67_{\pm 0.24}$ & ${82.30}_{\pm 0.36}$ \\
InfLoRA~\cite{infloraCVPR24} & CVPR-24 & 11.3MB & 0.37M (0.43\%) & 17.67 & 0.00M (0.00\%) & 17.67 & $71.01_{\pm 0.45}$ & $77.28_{\pm 0.45}$\\
VPT-NSP~\cite{vptnspNIPS-24} & NIPS-24 & 28.4MB & 0.09M (0.10\%) & 17.72 & 0.09M (0.10\%) & 17.72 & $75.69_{\pm 0.61}$ & $81.87_{\pm 0.59}$ \\
SD-LoRA~\cite{sdloraICLR25} & ICLR-25 & 14.1MB & 0.37M (0.43\%) & 20.51 & 0.00M (0.00\%) & 17.72 & $75.26_{\pm 0.37}$ & $80.22_{\pm 0.72}$\\
VPT-CPG~\cite{vptcpg-AAAI25} & AAAI-25 & 28.4MB & 1.66M (1.93\%) & 17.72 & 1.66M (1.93\%) & 17.72 & $75.33_{\pm 0.37}$ & $80.22_{\pm 0.72}$ \\
\midrule
\rowcolor{step1bg}
\textbf{LoDA--r=4 (Ours)} & -- & 27.0MB & 0.22M (0.25\%) & 17.67 & 0.00M (0.00\%) & 17.60 & {$\mathbf{78.13_{\pm 0.33}}$} & {$\mathbf{84.56_{\pm 0.53}}$}\\
\rowcolor{step1bg}
\textbf{LoDA--r=8 (Ours)} & -- & 27.0MB & 0.44M (0.51\%) & 17.71 & 0.00M (0.00\%) & 17.60 & {$\mathbf{78.47_{\pm 0.61}}$} & {$\mathbf{84.76_{\pm 0.35}}$}\\
\rowcolor{step1bg}
\textbf{LoDA--r=16 (Ours)} & -- & 27.0MB & 0.88M (1.02\%) & 17.83 & 0.00M (0.00\%) & 17.60 & {$\mathbf{79.11_{\pm 0.39}}$} & {$\mathbf{84.85_{\pm 0.45}}$} \\
\rowcolor{step1bg}
\textbf{LoDA--r=32 (Ours)} & -- & 27.0MB & 1.77M (2.06\%) & 18.06 & 0.00M (0.00\%) & 17.60 & {$\mathbf{78.96_{\pm 0.23}}$} & {$\mathbf{84.94_{\pm 0.17}}$} \\
\rowcolor{step1bg}
\textbf{LoDA--r=64 (Ours)} & -- & 27.0MB & 3.53M (4.10\%) & 18.52 & 0.00M (0.00\%) & 17.60 & {$\mathbf{78.93_{\pm 0.05}}$} & {$\mathbf{84.91_{\pm 0.59}}$} \\
\bottomrule
\end{tabular}
\end{adjustbox}
\end{table*}

We evaluate the impact of rank in LoDA on the number of additional parameters, storage and computational overhead, and performance on 20S-ImageNetR. We also compare LoDA with different ranks against representative CL baselines under the ViT-21K backbone~\cite{ViT}. Specifically, we vary the LoRA rank $r \in \{4, 8, 16, 32, 64\}$, and denote the corresponding variant as ``LoDA-r=$r$''. The results are reported in Tab.~\ref{tab: memory_overhead_appx}. Overall, LoDA consistently outperforms existing PEFT-based CL methods across all ranks, while introducing only a small number of trainable parameters and modest training compute overhead. Notably, LoDA incurs \emph{\textbf{zero extra parameters and overhead at inference}} by merging both branches into the backbone, keeping inference GFLOPs fixed at the backbone cost ($17.60$). Moreover, LoDA \emph{\textbf{avoids storing past-task models for model expansion, regularization, or knowledge distillation.}} Its storage cost comes from a single task statistics matrix of size $D\times D$ in each ViT layer (with $D=768$ for ViT-21K), making the memory footprint lightweight and scalable as the number of tasks grows.

As shown in Tab.~\ref{tab: memory_overhead_appx}, increasing $r$ from $4$ to $16$ steadily improves accuracy (e.g., $\mathcal{A}_{last}$ rises from $78.13$ to $79.11$), suggesting that a moderate rank provides sufficient capacity to capture both transferable and task-adaptive directions. Beyond this point, the performance gains saturate, where larger ranks (e.g., $r=64$) achieve comparable results but incur noticeably higher training compute (training GFLOPs increase from $17.83$ at $r=16$ to $18.52$ at $r=64$) and more trainable parameters. This indicates that LoDA is not overly sensitive to rank and can reach strong performance under small ranks. We set $r=32$ in all other experiments as a practical trade-off that maintains near-saturated accuracy while keeping the overhead modest.

\subsection{Experimental Results Using the CLIP Model as the Backbone}

\begin{table*}[h]
\centering
\caption{Experimental results on ImageNetR, CIFAR100 and UCF datasets using CLIP as the backbone. All methods are initialized with the CLIP pre-trained on LAION-400M \emph{\textbf{without experience replay}} for fair comparisons.
PROOF$^\dag$ represents our reproduced results of PROOF~\cite{PROOF-TPAMI25} without experience replay. The best results are in \textcolor{red}{red} and the second highest results are in \textcolor{blue}{blue}.
}\label{tab: imagenetr_cifar_ucf}
\resizebox{\textwidth}{!}{
\begin{tabular}{@{}lccccccccccccccc@{}}
\toprule
\multicolumn{1}{c}{\multirow{3}{*}{Method}} &
\multicolumn{4}{c}{ImageNetR} & \multicolumn{4}{c}{CIFAR100} & \multicolumn{4}{c}{UCF} \\
\cmidrule(l){2-13} 
& \multicolumn{2}{c}{B0 Inc10} & \multicolumn{2}{c}{B50 Inc10} & \multicolumn{2}{c}{B0 Inc10} & \multicolumn{2}{c}{B50 Inc10} & \multicolumn{2}{c}{B0 Inc20} & \multicolumn{2}{c}{B100 Inc20} \\
\cmidrule(l){2-13} 
& $\mathcal{A}_{Avg}$ & $\mathcal{A}_{Last}$ 
& $\mathcal{A}_{Avg}$ & $\mathcal{A}_{Last}$ 
& $\mathcal{A}_{Avg}$ & $\mathcal{A}_{Last}$ 
& $\mathcal{A}_{Avg}$ & $\mathcal{A}_{Last}$ 
& $\mathcal{A}_{Avg}$ & $\mathcal{A}_{Last}$ 
& $\mathcal{A}_{Avg}$ & $\mathcal{A}_{Last}$ \\
\midrule
CoOp~\cite{CoOp} & 60.73 & 37.52 & 54.20 & 39.77 & 47.00 & 24.24 & 41.23 & 24.12 & 47.85 & 33.46 & 42.02 & 24.74 \\
ZS-CLIP~\cite{2021clip} & 83.37 & 77.17 & 79.57 & 77.17 & 81.81 & 71.38 & 76.49 & 71.38  &  75.50 & 67.64 & 71.44 & 67.64\\
L2P~\cite{l2pCVPR22} & 75.97 & 66.52 & 72.82 & 66.77 & 82.74 & 73.03 & 81.14 & 73.61 & 86.34 & 76.43 & 83.95 & 76.62\\
DualPrompt~\cite{dualpromptECCV22} & 76.21 & 66.65 & 73.22 & 67.58 & 81.63 & 72.44 & 80.12 & 72.57 &  85.21 & 75.82 & 84.31 & 76.35\\
CODA-Prompt~\cite{codaCVPR23} & 77.69 & 68.95 & 73.71 & 68.05 &  82.43 & 73.43 & 78.69 & 71.58 & 87.76 & 80.14 & 83.04 & 75.03\\
SimpleCIL~\cite{adamIJCV2024} & 81.06 & 74.48 & 76.84 & 74.48 & 84.15 & 76.63 & 80.20 & 76.63 &  90.44 & \textcolor{blue}{85.68} & 88.12 & \textcolor{blue}{85.68}\\
RAPF~\cite{rapfECCV24} & \textcolor{blue}{86.28} & \textcolor{blue}{79.62} & \textcolor{blue}{84.03} & \textcolor{blue}{79.21} & \textcolor{blue}{86.19} & \textcolor{blue}{79.04} & \textcolor{blue}{82.35} & \textcolor{blue}{78.25} & \textcolor{blue}{92.28} & 80.33 & \textcolor{blue}{90.31} & 81.55\\
PROOF$^\dag$~\cite{PROOF-TPAMI25} & 82.69 & 77.25 & 80.56 & 77.03 & 84.88 & 76.29 & 81.85 & 75.93 & 87.04 & 81.47 & 86.15 & 81.85\\
\midrule
\textbf{LoDA (Ours)} & \textcolor{red}{87.32} & \textcolor{red}{82.65} & \textcolor{red}{85.53} & \textcolor{red}{83.00} & \textcolor{red}{88.86} & \textcolor{red}{81.72} & \textcolor{red}{86.77} & \textcolor{red}{83.82} & \textcolor{red}{94.55}& \textcolor{red}{90.75} & \textcolor{red}{94.87} & \textcolor{red}{92.00} \\
\bottomrule
\end{tabular}
}
\end{table*}

\begin{table*}[h]
\centering
\caption{Experimental results on Aircraft, Cars and CUB datasets using CLIP as the backbone. All methods are initialized with the CLIP pre-trained on LAION-400M \emph{\textbf{without experience replay}} for fair comparisons.
PROOF$^\dag$ represents our reproduced results of PROOF~\cite{PROOF-TPAMI25} without experience replay. The best results are in \textcolor{red}{red} and the second highest results are in \textcolor{blue}{blue}.
}\label{tab: aircraft_cars_cub}
\resizebox{\textwidth}{!}{
\begin{tabular}{@{}lccccccccccccccc@{}}
\toprule
\multicolumn{1}{c}{\multirow{3}{*}{Method}} &
\multicolumn{4}{c}{Aircraft} & \multicolumn{4}{c}{Cars} & \multicolumn{4}{c}{CUB} \\
\cmidrule(l){2-13} 
& \multicolumn{2}{c}{B0 Inc10} & \multicolumn{2}{c}{B50 Inc10} & \multicolumn{2}{c}{B0 Inc10} & \multicolumn{2}{c}{B50 Inc10} & \multicolumn{2}{c}{B0 Inc20} & \multicolumn{2}{c}{B100 Inc20} \\
\cmidrule(l){2-13} 
& $\mathcal{A}_{Avg}$ & $\mathcal{A}_{Last}$ 
& $\mathcal{A}_{Avg}$ & $\mathcal{A}_{Last}$ 
& $\mathcal{A}_{Avg}$ & $\mathcal{A}_{Last}$ 
& $\mathcal{A}_{Avg}$ & $\mathcal{A}_{Last}$ 
& $\mathcal{A}_{Avg}$ & $\mathcal{A}_{Last}$ 
& $\mathcal{A}_{Avg}$ & $\mathcal{A}_{Last}$ \\
\midrule
CoOp~\cite{CoOp} & 14.54 & 7.14 & 13.05 & 7.77 & 36.46 & 21.65 & 37.40 & 20.87 & 27.61 & 8.57 & 24.03 & 10.14 \\
ZS-CLIP~\cite{2021clip} & 26.66 & 17.22 & 21.70 & 17.22 & 82.60 & 76.37 & 78.32 & 76.37 & 74.38 & 63.06 & 67.96 & 63.06 \\
L2P~\cite{l2pCVPR22} & 47.19 & 28.29 & 44.07 & 32.13 & 76.63 & 61.82 & 76.37 & 65.64 & 70.87 & 57.93 & 75.64 & 66.12 \\
DualPrompt~\cite{dualpromptECCV22} & 44.30 & 25.83 & 46.07 & 33.57 & 76.26 & 62.94 & 76.88 & 67.55 & 69.89 & 57.46 & 74.40 & 64.84 \\
CODA-Prompt~\cite{codaCVPR23} & 45.98 & 27.69 & 45.14 & 32.28 & 80.21 & 66.47 & 75.06 & 64.19 & 73.12 & 62.98 & 73.95 & 62.21 \\
SimpleCIL~\cite{adamIJCV2024} & \textcolor{blue}{59.24} & \textcolor{blue}{48.09} & 53.05 & \textcolor{blue}{48.09} & \textcolor{blue}{92.04} & \textcolor{blue}{86.85} & \textcolor{blue}{88.96} & \textcolor{blue}{86.85} & \textcolor{blue}{83.81} & \textcolor{blue}{77.52} & \textcolor{blue}{79.75} & \textcolor{blue}{77.52} \\
RAPF~\cite{rapfECCV24} & 50.38 & 23.61 & 40.47 & 25.44 & 82.89 & 62.85 & 75.87 & 63.19 & 79.09 & 62.77 & 72.82 & 62.93 \\
PROOF$^\dag$~\cite{PROOF-TPAMI25} & 47.33 & 39.03 & \textcolor{blue}{54.35} & 45.36 & 86.87 & 84.06 & 88.68 & 82.66 & 79.94 & 72.60 & 76.43 & 71.80 \\
\midrule
\textbf{LoDA (Ours)} & \textcolor{red}{71.53} & \textcolor{red}{60.97} & \textcolor{red}{67.63} & \textcolor{red}{61.81} & \textcolor{red}{94.27} & \textcolor{red}{90.57} & \textcolor{red}{91.80} & \textcolor{red}{90.40} & \textcolor{red}{86.84}& \textcolor{red}{80.20} & \textcolor{red}{82.83} & \textcolor{red}{79.26} \\
\bottomrule
\end{tabular}
}
\end{table*}

To demonstrate that our LoDA is not tied to a specific architecture and can generalize to other pre-trained models, we further evaluate it using the representative vision-language model CLIP. The results are shown in Tab.\ref{tab: imagenetr_cifar_ucf} and Tab.\ref{tab: aircraft_cars_cub}. 

\paragraph{Datasets and Evaluation Metrics.} Following prior CLIP-based continual learning methods~\cite{PROOF-TPAMI25, rapfECCV24}, we evaluate our method on six benchmarks, namely ImageNetR~\cite{imagenet-r}, CIFAR100~\cite{cifar100}, UCF~\cite{ucf101_dataset}, Aircraft~\cite{aircraft_dataset}, Cars~\cite{stanfordcars2013}, and CUB~\cite{CUB}. ImageNet-R consists of 30,000 images from 200 ImageNet categories that contains artistic renditions ($e.g.$, paintings, cartoons, and sketches). CIFAR100 is a natural image classification benchmark with 60,000 32 $\times$ 32 images from 100 object categories. UCF101 is an action recognition dataset consisting of videos from 101 human action classes. Aircraft, Cars and CUB are typical fine-grained classification benchmarks. Aircraft consists of 10,200 high-resolution aircraft photographs from 102 model-variant classes. Cars contains 16,185 car images from 196 classes. CUB is a bird dataset with 11,788 images from 200 bird species. Specifically, we sample (a subset of) 100 classes from Aircraft, Cars and UCF to ease data split. The dataset
splits are denoted as ``B-x Inc-y'', where x represents the number
of classes in the first stage, and y represents the number of new
classes in each subsequent task. 
x = 0 means each task contains y
classes.

\paragraph{Implementation Details of LoDA based on the CLIP model.} We implement the proposed LoDA on top of a simple yet effective baseline, \emph{i.e.}, SimpleCIL~\cite{adamIJCV2024}. SimpleCIL keeps the pre-trained model frozen and performs continual adaptation by constructing a prototypical classifier, whose class weights are set to the mean features computed from downstream data in the frozen embedding space, without any additional optimization. Building upon SimpleCIL, we maintain a frozen pre-trained CLIP $f_{\mathrm{pt}}(\cdot)$ with its prototypical classifier $\mathbf{W}_{\mathrm{pt}}$, while sequentially tuning another CLIP $f_{\mathrm{ft}}(\cdot)$ using our LoDA framework. The fine-tuned network is trained with the standard CLIP-style image--text matching loss~\cite{PROOF-TPAMI25} using the textual classifier $\mathbf{W}_{\mathrm{text}}$. During inference, given an input $\mathbf{x}$, we combine the scores from the frozen and fine-tuned models to produce the final prediction:

\begin{equation}
\mathbf{p} = \alpha \mathbf{W}_{\mathrm{pt}}^\top f_{\mathrm{pt}}(\mathbf{x}) + (1-\alpha) \mathbf{W}_{\mathrm{text}}^\top f_{\mathrm{ft}}(\mathbf{x}).
\end{equation}

\paragraph{Comparison methods.} We evaluate LoDA against a broad set of representative CLIP-based baselines. These include prompt-learning methods, namely CoOp~\cite{CoOp}, L2P~\cite{l2pCVPR22}, DualPrompt~\cite{dualpromptECCV22}, and CODA-Prompt~\cite{codaCVPR23}; CLIP-tailored continual learning approaches such as RAPF~\cite{rapfECCV24} and PROOF~\cite{PROOF-TPAMI25}; a training-free baseline SimpleCIL~\cite{adamIJCV2024}; and the zero-shot CLIP model~\cite{2021clip}, denoted as ZS-CLIP. For a fair comparison, we follow PROOF~\cite{PROOF-TPAMI25} and use the CLIP backbone pre-trained on LAION-400M across all experiments. 

\paragraph{Results and Analysis.} As shown in Tab.\ref{tab: imagenetr_cifar_ucf} and Tab.\ref{tab: aircraft_cars_cub}, our LoDA achieves best performances across these datasets, demonstrating its strong ability to generalize to different pre-trained models and different downstream tasks. Notably, several fine-tuning based baselines underperform even the zero-shot CLIP and the training-free SimpleCIL, indicating that continual fine-tuning readily degrades the pre-trained representations. In contrast, LoDA preserves the strength of the pre-trained CLIP while enabling effective adaptation by constraining updates to decomposed general and isolated subspaces, resulting in superior performances. Moreover, compared to existing CLIP-based methods, LoDA has two main advantages: (1) Unlike Gaussian sampling-based feature replay methods such as RAPF~\cite{rapfECCV24} that maintain per-class distribution statistics, LoDA is replay-free and avoids any class-wise memory that scales linearly with the growing label space, while is more efficient; (2) Instead of only adding a tunable head after CLIP while freezing its core weights, LoDA achieves layer-wise continual fine-tuning, thereby injecting new knowledge into the backbone itself.

\subsection{Experimental Results Using Self-Supervised Pre-Trained Backbones}

\begin{table}[h]
\centering
\caption{Results of existing methods under various pre-trained models. Here, DINO-1k and iBOT-1k denote that the frozen backbone is pre-trained on ImageNet-1k through DINO and iBOT pre-training algorithms, respectively. The best results are in \textcolor{red}{red} and the second highest results are in \textcolor{blue}{blue}.}\label{tab: dino-ibot}
\begin{adjustbox}{max width=0.68\textwidth}
\begin{tabular}{llccc}
\toprule
\multirow{2}{*}{PTM} & \multirow{2}{*}{Method} & \multirow{2}{*}{Venue} & \multicolumn{2}{c}{10S-ImageNetR}\\
& & & $\mathcal{A}_{Last}$ & $\mathcal{A}_{Avg}$\\
\midrule
\rowcolors{1}{step1bg}{step1bg}
\multirow{10}*{DINO-1k}  &L2P \cite{l2pCVPR22} & CVPR'22 & 56.71\tsb{\(\pm\)0.12} & 63.59\tsb{\(\pm\)0.21}\\
{} & DualPrompt \cite{dualpromptECCV22} & ECCV'22 & 60.23\tsb{\(\pm\)0.42} & 66.57\tsb{\(\pm\)0.25} \\
{} & CODAPrompt \cite{codaCVPR23} & CVPR'23 & 64.02\tsb{\(\pm\)0.68} &71.50\tsb{\(\pm\)0.42} \\
{} & LAE \cite{laeICCV23} & ICCV'23 & 61.03\tsb{\(\pm\)0.27} & 69.89\tsb{\(\pm\)0.15} \\
{} & HiDe-Prompt \cite{hide-prompt} & NIPS'23 & 68.11\tsb{\(\pm\)0.18} & 71.70\tsb{\(\pm\)0.01} \\
{} & InfLoRA \cite{infloraCVPR24} & CVPR'24 & 68.31\tsb{\(\pm\)0.28} & 76.15\tsb{\(\pm\)0.05} \\
{} & VPT-NSP \cite{vptnspNIPS-24} & NIPS'24 & 68.96\tsb{\(\pm\)0.94} & 76.22\tsb{\(\pm\)0.56}\\
{} & SD-LoRA \cite{sdloraICLR25} & ICLR'25 & 
69.78\tsb{\(\pm\)0.63} & 75.45\tsb{\(\pm\)0.35} \\
{} & CoSO \cite{cosoNIPS25} & NIPS'25 & \textcolor{blue}{71.60\tsb{\(\pm\)0.44}} & \textcolor{blue}{79.28\tsb{\(\pm\)0.16}} \\
\cmidrule{2-5}
{} & Seq-LoRA (Baseline) & -- & 54.87\tsb{\(\pm\)0.47} & 69.27\tsb{\(\pm\)0.17} \\
{} & \textbf{LoDA (Ours)} & -- &\textcolor{red}{73.66\tsb{\(\pm\)0.21}} & \textcolor{red}{80.51\tsb{\(\pm\)0.68}}\\
\bottomrule
\multirow{9}*{iBOT-1k}  &L2P \cite{l2pCVPR22} & CVPR'22 & 60.80\tsb{\(\pm\)0.35} & 66.58\tsb{\(\pm\)0.28}\\
{} & DualPrompt \cite{dualpromptECCV22} & ECCV'22 & 63.78\tsb{\(\pm\)0.38} & 68.88\tsb{\(\pm\)0.16} \\
{} & CODAPrompt \cite{codaCVPR23} & CVPR'23 & 68.02\tsb{\(\pm\)0.48} & 74.28\tsb{\(\pm\)0.47}\\
{} & LAE \cite{laeICCV23} & ICCV'23 & 64.14\tsb{\(\pm\)0.29} & 72.59\tsb{\(\pm\)0.22}\\
{} & HiDe-Prompt \cite{hide-prompt} & NIPS'23 & 71.33\tsb{\(\pm\)0.21} & 73.62\tsb{\(\pm\)0.13} \\
{} & InfLoRA \cite{infloraCVPR24} & CVPR'24 & 71.84\tsb{\(\pm\)0.09} & 78.29\tsb{\(\pm\)0.09}\\
{} & VPT-NSP \cite{vptnspNIPS-24} & NIPS'24 & \textcolor{blue}{{73.25}\tsb{\(\pm\)0.78}} & \textcolor{blue}{{79.65}\tsb{\(\pm\)0.63}} \\
\cmidrule{2-5}
{} & Seq-LoRA (Baseline) & -- & 56.28\tsb{\(\pm\)0.52} & 71.15\tsb{\(\pm\)0.85} \\
{} & \textbf{LoDA (Ours)} & -- & \textcolor{red}{{76.83}\tsb{\(\pm\)0.27}} & \textcolor{red}{{82.83}\tsb{\(\pm\)0.57}} \\
\bottomrule
\end{tabular}
\end{adjustbox}
\end{table}

To further validate the generality of our framework on other backbones, we replace the supervised ViT-B/16~\cite{ViT} backbone pre-trained on ImageNet-21K~\cite{imagenet-21k} with self-supervised models pre-trained on ImageNet-1K using DINO~\cite{DINO} and iBOT~\cite{iBOT}, respectively. The performances are reported in Tab.~\ref{tab: dino-ibot} and the task-by-task accuracy curves are shown in Fig.~\ref{fig: dino_ibot_curve}. As shown in Tab.~\ref{tab: dino-ibot}, LoDA consistently achieves the best performance compared to existing methods, outperforming the state-of-the-arts CoSO by +2.06\% $\mathcal{A}_{Last}$ under DINO-1K and VPT-NSP by +3.58\% $\mathcal{A}_{Last}$ under iBOT-1K. Meanwhile, it brings large improvements over the sequential tuning baseline ``Seq-LoRA'', with +18.79\% and +20.55\% gains on $\mathcal{A}_{Last}$ under DINO-1k and iBOT-1k, respectively. Fig.~\ref{fig: dino_ibot_curve} further supports this advantage. Compared with Seq-LoRA, LoDA exhibits a substantially slower accuracy decay as tasks increase, demonstrating its effectiveness on mitigating catastrophic forgetting. Although the overall CL performance with self-supervised pre-training is lower than that with the supervised backbone, LoDA still achieve superior performances across both DINO and iBOT, highlighting its generality and robustness to different backbones.

\begin{figure*}[h]
\centering
\includegraphics[width=0.88\textwidth]{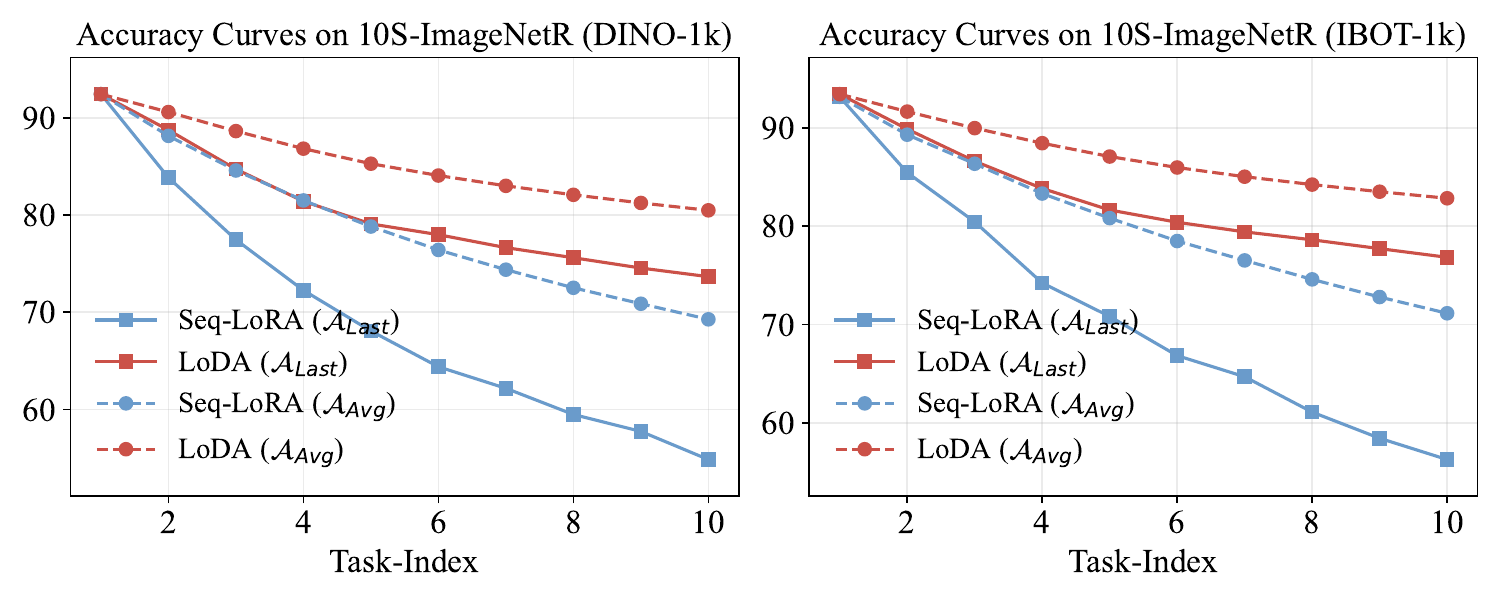}
\vspace{-2.0mm}
\caption{Task-by-task accuracy curve on 10S-ImageNetR using different self-supervised pre-trained models.}\label{fig: dino_ibot_curve}
\end{figure*}

\begin{figure*}[h]
\centering
\includegraphics[width=0.88\textwidth]{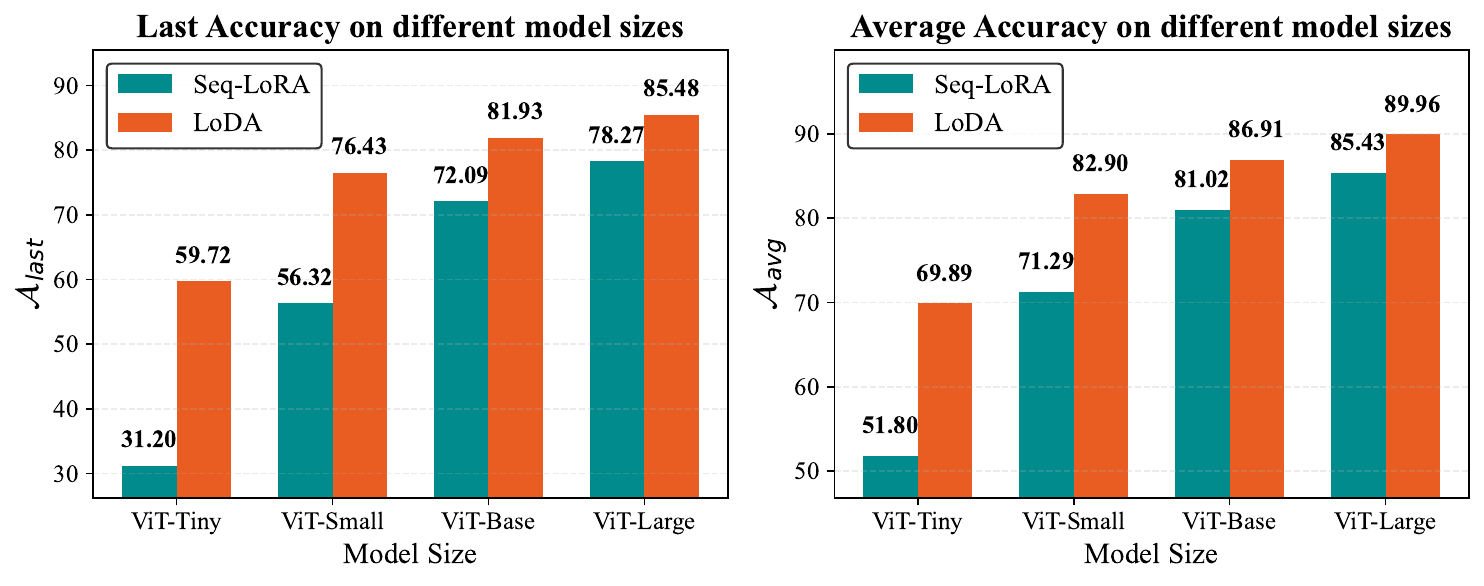}
\vspace{-2.0mm}
\caption{Performances of sequential tuning and LoDA using ViT models with different sizes on 10S-ImageNetR.}\label{fig: model_size}
\end{figure*}

\subsection{Experimental Results Using Vision Transformers with Different Sizes}

To further evaluate the generality and scalability of our LoDA, we conduct experiments using ViT backbones with different sizes, including ViT-Tiny ($D=192$), ViT-Small ($D=384$), ViT-Base ($D=768$), and ViT-Large ($D=1024$) on the 10S-ImageNetR benchmark. We evaluate the sequential tuning baseline (denoted as ``Seq-LoRA'') and our method, the results are shown in Fig.\ref{fig: model_size}. As shown in Fig.\ref{fig: model_size}, our LoDA delivers consistent and clear improvements over Seq-LoRA across 4 model sizes on both $\mathcal{A}_{Last}$ and $\mathcal{A}_{Avg}$, with gains ranging from +7.21\% to +28.52\% on $\mathcal{A}_{Last}$. Notably, the advantages are more significant on smaller models ($e.g.$, ViT-Tiny). It is because their weak pre-trained representations require larger magnitudes of parameter updates, which better highlights the effectiveness of improving the stability–plasticity trade-off by constraining updates within structured subspaces. Overall, our LoDA is a scalable and model size-agnostic CL framework that brings consistent performance improvements.

\subsection{Visualization of the Loss Landscapes}

\begin{figure*}[h]
\centering
\includegraphics[width=1.00\textwidth]{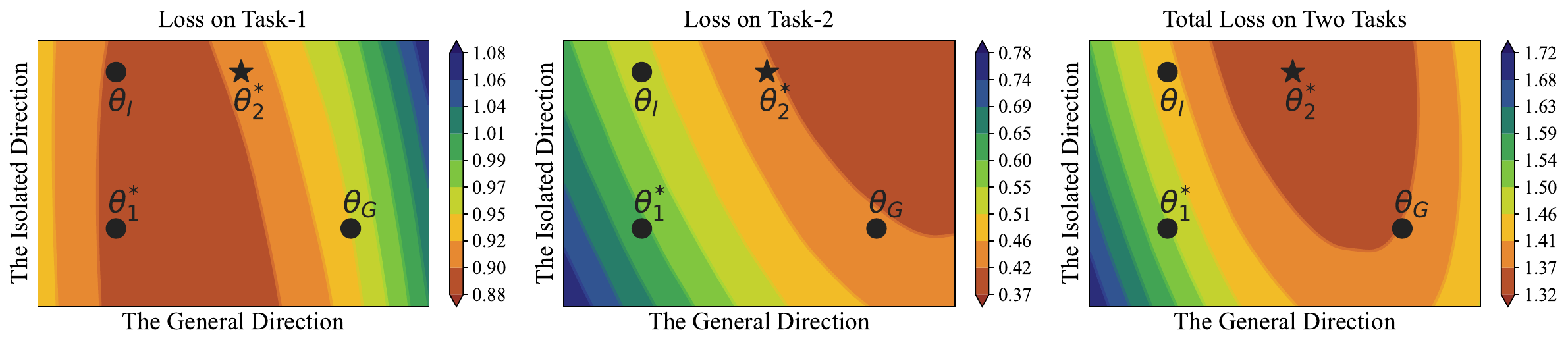}
\vspace{-2.0mm}
\caption{Visualization of the loss landscapes on the first task $\mathcal{D}^1$, the second task $\mathcal{D}^2$, and two tasks on 10S-ImageNetA.}\label{fig: ina_landscape}
\end{figure*}

\begin{figure*}[h]
\centering
\includegraphics[width=1.00\textwidth]{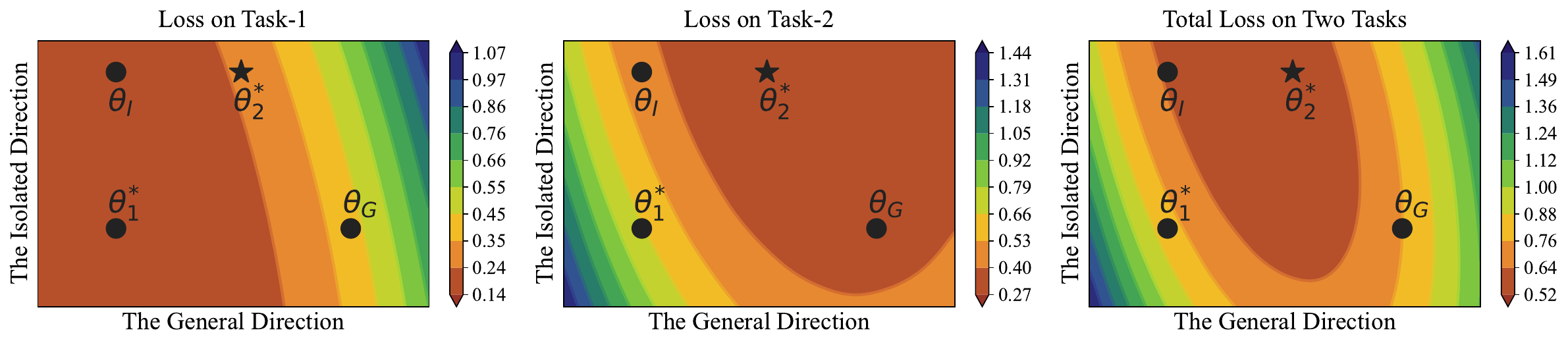}
\vspace{-2.0mm}
\caption{Visualization of the loss landscapes on the first task $\mathcal{D}^1$, the second task $\mathcal{D}^2$, and two tasks on 10S-ImageNetR.}\label{fig: inr_landscape}
\end{figure*}

To better understand how our two-branch LoRA updates enhances the stability-plasticity trade-off, we visualize the loss landscapes along the general directions ($\text{LoRA}_G$) and the isolated directions ($\text{LoRA}_I$) on 10S-ImageNetA and 10S-ImageNetR datasets. The results are shown in Fig.\ref{fig: ina_landscape} and Fig.\ref{fig: inr_landscape}. Specifically, starting from the parameters $\theta_1^\star = \mathbf{W}^1$ after the first session, we construct a 2D plane spanned by the two update directions and evaluate the loss on a grid of coefficients $\alpha,\beta \in [-0.5,1.3]$. Each point on the surface corresponds to $\theta (\alpha, \beta) = \theta_1^\star + \alpha w_G\mathbf{B}_G\mathbf{A}_G + \beta \mathbf{B}_I\mathbf{A}_I$. In Fig.\ref{fig: ina_landscape} and Fig.\ref{fig: inr_landscape}, $\theta_G = \theta_1^\star + w_G\mathbf{B}_G\mathbf{A}_G$ and $\theta_I = \theta_1^\star + \mathbf{B}_I\mathbf{A}_I$. For simplicity, we set $\theta_2^\star = \theta_1^\star + \frac{1}{r} \sum_{j=1}^r \gamma^{(j)} w_G\mathbf{B}_G\mathbf{A}_G + \mathbf{B}_I\mathbf{A}_I$.

As shown in the results, the isolated direction lies in a region where the loss on $\mathcal{D}^1$ varies minimally while the loss on $\mathcal{D}^2$ changes obviously, indicating that it indeed captures truly task-specific knowledge that effectively adapts to the new task with limited interference to previous tasks. The general direction induces large loss variations on both tasks, reflecting its role as a shared subspace that captures general knowledge. Notably, a moderate update along the general direction for the new task does not induce loss increases on the old task. Instead, it moves the solution toward the center of the low-loss region of the old task, which suggests that it promotes positive cross-task knowledge transfer. Overall, the visualizations demonstrate that LoDA’s effectiveness on finding a joint optimum during continual adaptation.

\end{document}